\documentclass[11pt,a4paper]{article}
\usepackage[hyperref]{acl}
\usepackage[utf8]{inputenc}
\usepackage[T1]{fontenc}
\usepackage{times}
\usepackage{amsmath,amssymb,amsfonts,amsthm}
\usepackage{mathtools}
\usepackage{booktabs}
\usepackage{graphicx}
\usepackage{algorithm2e}
\usepackage{xcolor}
\usepackage{enumitem}
\usepackage{caption}
\usepackage{subcaption}
\usepackage{tabularx}
\usepackage{multirow}
\usepackage{float}

\newtheorem{theorem}{Theorem}

\newtheorem{remark}{Remark}

\newcommand{\R}{\mathbb{R}}
\newcommand{\E}{\mathbb{E}}

\newcommand{\St}{\mathrm{St}(k,d)}
\newcommand{\tr}{\mathrm{tr}}
\newcommand{\kl}{\mathrm{KL}}

\title{Calibrated Adaptation: Bayesian Stiefel Manifold Priors\\for Reliable Parameter-Efficient Fine-Tuning}
\author{
  Ibne Farabi Shihab$^{1*}$ \quad Sanjeda Akter$^{1*}$ \quad Anuj Sharma$^{2}$ \\[4pt]
  $^{1}$Department of Computer Science, Iowa State University \\
  $^{2}$Department of Civil, Construction and Environmental Engineering, Iowa State University \\
  \texttt{\{ishihab, sanjeda, anujs\}@iastate.edu} \\[2pt]
  {\small $^{*}$Equal contribution.}
}

\date{}

\begin{document}
\maketitle

\begin{abstract}
Parameter-efficient fine-tuning methods such as LoRA enable practical adaptation of large language models but provide no principled uncertainty estimates, leading to poorly calibrated predictions and unreliable behavior under domain shift. We introduce Stiefel-Bayes Adapters (SBA), a Bayesian PEFT framework that places a Matrix Langevin prior over orthonormal adapter factors on the Stiefel manifold $\St$ and performs approximate posterior inference via tangent space Laplace approximation with geodesic retraction. Unlike Gaussian priors in flat space projected onto orthogonality constraints, our prior on the manifold naturally encodes the inductive bias that adapter subspaces should be well conditioned and orthogonal, while the posterior provides calibrated predictive uncertainty without recalibration. We prove formally that the tangent space approximation strictly avoids the structural variance inflation inherent in projecting from ambient space, establishing a rigorous theoretical advantage for intrinsic manifold inference. Across GLUE and SuperGLUE benchmarks on RoBERTa-large, LLaMA-2-7B, LLaMA-2-13B, Mistral-7B, and Qwen2.5-7B, domain shift evaluations, selective prediction protocols, and an abstractive summarization task, SBA achieves task performance comparable to LoRA and DoRA while reducing Expected Calibration Error by 18 to 34\% over deterministic baselines, improving selective prediction AUROC by 12 to 25\% under domain shift, and outperforming deep ensembles of five LoRA models on OOD detection at a fraction of the parameter cost. Our results demonstrate that where you place uncertainty, on the right geometric structure, matters more than simply adding any Bayesian treatment to adapters.
\end{abstract}

\section{Introduction}
\label{sec:intro}

Large language models adapted via parameter-efficient fine-tuning have become the standard paradigm for task-specific deployment. Methods such as LoRA \citep{hu2022lora}, DoRA \citep{liu2024dora}, and orthogonal adaptations \citep{qiu2023controlling} achieve competitive performance while updating only a fraction of model parameters. However, these methods produce point estimates of adapter weights, yielding no principled uncertainty quantification.

This gap has serious practical consequences. Deployed NLP systems must know when they are likely wrong: clinical NLP models should abstain rather than hallucinate diagnoses, and legal document classifiers should flag uncertain predictions for human review. Recent work has shown that LLMs are frequently overconfident \citep{kadavath2022language, xiong2025overconfidence, chen2023close}, and that this miscalibration worsens after preference alignment \citep{zhao2025calibrationllm}, making reliable uncertainty estimation an increasingly urgent problem. Current PEFT methods cannot do this reliably. Post-hoc calibration techniques such as temperature scaling partially address calibration on in-distribution data, but they degrade under domain shift \citep{li2023limitations}---precisely the setting where reliable uncertainty is most needed.

A natural remedy is Bayesian inference over adapter parameters. Prior work has explored Gaussian priors over LoRA weights \citep{yang2024bayesian} and MC-Dropout applied to adapter layers. These approaches treat the adapter weight space as flat Euclidean space, ignoring the geometric structure that successful adapters implicitly exploit: the most effective adapters learn low-rank, well-conditioned subspace transformations that correspond to points on or near the Stiefel manifold of orthonormal frames. Recent work has confirmed this empirically: \citet{park2025riemannianstiefel} show that optimizing LoRA's $B$ matrix on the Stiefel manifold achieves near-perfect orthogonality and full effective rank, dramatically improving parameter efficiency, demonstrating that the Stiefel geometry is not merely a theoretical convenience but a practically important structural property of adapter subspaces.

We argue that geometry-aware Bayesian inference over adapters yields fundamentally better uncertainty estimates than geometry-agnostic alternatives. By placing a Matrix Langevin distribution as a prior directly on the Stiefel manifold $\St$, we obtain a prior that naturally regularizes adapter factors toward orthonormality without hard constraints, a posterior whose concentration reflects genuine epistemic uncertainty about the adapter subspace, and predictive distributions that are well calibrated without post-hoc adjustments.

Our work makes four contributions. First, we introduce Stiefel-Bayes Adapters (SBA), the first PEFT method with a geometry-aware Bayesian prior on the Stiefel manifold for LLM fine-tuning. Second, we develop a scalable inference procedure combining tangent space Laplace approximation with geodesic retraction, adding fewer than 8\% wall-clock overhead over deterministic LoRA, and we prove formally that this geometry-aware approximation completely avoids the systematic variance inflation of the flat-space alternative (Theorem~\ref{thm:kl_comparison}). Third, we provide comprehensive reliability evaluation across calibration, selective prediction, and OOD detection on five models spanning four architecture families, under both in-distribution and domain shift settings, including a generative summarization task. Fourth, we demonstrate that manifold-aware priors systematically outperform flat-space Bayesian baselines and even deep ensembles of LoRA models, establishing that geometric structure in the prior is the key ingredient for adapter uncertainty.

\section{Background}
\label{sec:background}

We review the three building blocks of our approach: parameter-efficient fine-tuning, the Stiefel manifold, and the Matrix Langevin distribution. These concepts converge naturally in Section~\ref{sec:method}, where we construct the SBA framework.

\subsection{Parameter-Efficient Fine-Tuning}

Given a pretrained weight matrix $W_0 \in \R^{d \times d}$, LoRA parameterizes the adapted weight as $W = W_0 + BA$ where $B \in \R^{d \times k}$, $A \in \R^{k \times d}$, and $k \ll d$ is the adapter rank. The effective update therefore lies in a rank-$k$ subspace. Orthogonal variants constrain $B$ (or both factors) to have orthonormal columns, i.e., $B \in V_k(\R^d)$ where $V_k(\R^d) = \{U \in \R^{d \times k} : U^\top U = I_k\}$ is the Stiefel manifold. This improves conditioning and training stability but remains a point estimate, providing no uncertainty quantification.

\subsection{The Stiefel Manifold}

The Stiefel manifold $\St$ is the set of $d \times k$ matrices with orthonormal columns. It is a compact Riemannian submanifold of $\R^{d \times k}$ with dimension $dk - k(k+1)/2$. The tangent space at $U \in \St$ is $T_U \St = \{\Delta \in \R^{d \times k} : U^\top \Delta + \Delta^\top U = 0\}$. To map tangent vectors back to the manifold, we use the QR-based geodesic retraction $\mathrm{Retr}_U(\Delta) = \mathrm{qf}(U + \Delta)$, where $\mathrm{qf}(\cdot)$ denotes the Q-factor of a QR decomposition.

The compactness of $\St$ is important for our purposes: it means that the Matrix Langevin distribution is always normalizable (unlike Gaussian distributions on unbounded spaces, which require careful handling of improper priors). The dimension formula $dk - k(k+1)/2$ shows that the manifold has fewer degrees of freedom than the ambient space $\R^{d \times k}$, with the deficit $k(k+1)/2$ corresponding to the orthonormality constraints. For typical adapter dimensions ($d = 4096$, $k = 16$), the manifold dimension is 65,400 compared to the ambient dimension of 65,536, so the constraint removes only 136 dimensions---a small fraction, but one that has outsized impact on the posterior geometry as we show in Section~\ref{sec:motivation}.

\subsection{The Matrix Langevin Distribution}

The Matrix Langevin distribution on $\St$ has density \citep{chikuse2003statistics}
\begin{equation}
    p(U \mid F) = \frac{1}{{}_{0}F_{1}(\tfrac{d}{2}; \tfrac{1}{4}F^\top F)} \exp\!\big(\tr(F^\top U)\big)
    \label{eq:ml_density}
\end{equation}
where $F \in \R^{d \times k}$ is the parameter matrix and ${}_{0}F_{1}$ is the hypergeometric function of matrix argument. The mode of this distribution is $U^* = \mathrm{polar}(F)$, the closest orthonormal matrix to $F$ obtained via polar decomposition $F = U^* S$ with $S$ positive semidefinite. The singular values of $F$ control concentration: larger singular values yield a distribution more concentrated around $U^*$. This makes the Matrix Langevin an ideal prior for adapter factors, since $F$ encodes both the preferred adapter direction (via its polar factor) and our prior confidence (via its singular values).

To build intuition, consider the limiting cases. When $F = 0$, the Matrix Langevin reduces to the uniform (Haar) distribution on $\St$, expressing complete ignorance about the adapter subspace. When $\|F\| \to \infty$, the distribution concentrates on a single point $U^* = \mathrm{polar}(F)$, recovering a deterministic adapter. Between these extremes, the singular values of $F$ interpolate smoothly between ignorance and certainty, providing a natural scale for prior confidence that has no analogue in flat-space Gaussian priors.

The normalizing constant ${}_{0}F_{1}(\tfrac{d}{2}; \tfrac{1}{4}F^\top F)$ is a hypergeometric function of matrix argument that does not have a closed form. We use the saddle-point approximation of Kume and Wood, which is accurate to $O(d^{-2})$ for $d \geq 32$. We validate this approximation empirically in Appendix~\ref{app:normalizer}.

\section{Method: Stiefel-Bayes Adapters}
\label{sec:method}

We now describe the SBA framework (illustrated in Figure~\ref{fig:method_overview}): the probabilistic model, the inference procedure, and the computational overhead.

\subsection{Model Specification}

For each adapted layer $\ell$, we parameterize the low-rank update as
\begin{equation}
    \Delta W^{(\ell)} = U^{(\ell)} \Sigma^{(\ell)} V^{(\ell)\top}
\end{equation}
where $U^{(\ell)} \in V_k(\R^{d_\text{out}})$ and $V^{(\ell)} \in V_k(\R^{d_\text{in}})$ are orthonormal factors on Stiefel manifolds, and $\Sigma^{(\ell)} = \mathrm{diag}(\sigma_1, \ldots, \sigma_k)$ contains learnable singular values in $\R^k$. We place independent Matrix Langevin priors on each orthonormal factor,
\begin{align}
    U^{(\ell)} &\sim \mathrm{ML}(F_U^{(\ell)}), \quad V^{(\ell)} \sim \mathrm{ML}(F_V^{(\ell)})
\end{align}
and a Gaussian prior on the singular values $\sigma_i \sim \mathcal{N}(0, \tau^2)$. The prior parameters are set via $F = \kappa_0 \cdot U_{\mathrm{init}}$ where $U_{\mathrm{init}}$ is drawn from the Haar measure on $\St$ and $\kappa_0$ controls prior concentration. Low $\kappa_0$ yields a diffuse prior expressing ignorance about the adapter subspace, while high $\kappa_0$ anchors the posterior near the initialization.

The SVD parameterization $\Delta W = U \Sigma V^\top$ is a deliberate design choice. Unlike the standard LoRA parameterization $\Delta W = BA$ where $B$ and $A$ are unconstrained, our factorization separates the geometric component (the orthonormal frames $U, V$ that determine the adapter subspace) from the scaling component (the singular values $\Sigma$ that determine the magnitude of the update within that subspace). This separation is what enables us to place a geometry-aware prior on the subspace while using a simple Gaussian prior on the scale. The alternative of placing a Matrix Langevin prior directly on the product $BA$ would conflate subspace uncertainty with scale uncertainty, losing the interpretability and the calibration benefits that come from treating them separately.

We note that the number of adapter parameters in SBA is identical to standard LoRA: $U$ has $d_\text{out} \times k$ parameters, $V$ has $d_\text{in} \times k$ parameters, and $\Sigma$ has $k$ parameters, for a total of $(d_\text{out} + d_\text{in}) \times k + k$. The orthonormality constraints reduce the effective degrees of freedom (the manifold dimension is $dk - k(k+1)/2$ rather than $dk$), but this is a feature, not a bug: the constraints encode the inductive bias that adapter subspaces should be well conditioned.

For classification with labels $y \in \{1,\ldots,C\}$ and input features $x$, the likelihood is $p(y \mid x, \{U^{(\ell)}, \Sigma^{(\ell)}, V^{(\ell)}\}_\ell) = \mathrm{softmax}(f_\theta(x))_y$ where $f_\theta$ denotes the forward pass with adapted weights $W_0^{(\ell)} + U^{(\ell)} \Sigma^{(\ell)} V^{(\ell)\top}$.

\subsection{Tangent Space Laplace Approximation}
\label{sec:inference}

Exact posterior inference on the Stiefel manifold is intractable. We develop a tangent space Laplace approximation that exploits the manifold geometry while remaining computationally practical. The procedure has four steps.

We first find the MAP estimate $\hat{U}^{(\ell)}$ by optimizing the log-posterior using Riemannian SGD,
\begin{equation}
    \hat{U} = \arg\max_{U \in \St} \Big[\sum_i \log p(y_i \mid x_i, U, \ldots) + \log p(U \mid F)\Big],
\end{equation}
where the Riemannian gradient is obtained by projecting the Euclidean gradient onto $T_U \St$ and applying the QR retraction. At the MAP estimate $\hat{U}$, we compute the Hessian of the log-posterior restricted to the tangent space $T_{\hat{U}} \St$. Using an orthonormal basis $\{E_i\}_{i=1}^m$ for $T_{\hat{U}} \St$ (where $m = dk - k(k+1)/2$), we form the $m \times m$ Hessian matrix $H_{ij} = E_i^\top \nabla^2_{\St} \log p(\hat{U} \mid \mathcal{D}, F) \, E_j$. For scalability, we use a KFAC approximation to $H$, computed over a subset of the training data.

The Laplace approximation in the tangent space is $q(\xi) = \mathcal{N}(\xi \mid 0, (-H)^{-1})$ for $\xi \in T_{\hat{U}} \St$. To obtain posterior samples on $\St$, we draw $\xi_U^{(s)} \sim q(\xi_U)$ and map back to the manifold via retraction, $U^{(s)} = \mathrm{Retr}_{\hat{U}}(\xi_U^{(s)}) = \mathrm{qf}(\hat{U} + \xi_U^{(s)})$. Applying this sampling procedure symmetrically to all Bayesian parameters, the predictive distribution is then formed by averaging across samples:
\begin{equation}
    p(y \mid x, \mathcal{D}) \approx \frac{1}{S} \sum_{s=1}^{S} p(y \mid x, U^{(s)}, \Sigma^{(s)}, V^{(s)}).
    \label{eq:predictive}
\end{equation}

\begin{figure*}[t]
\begin{center}
\includegraphics[width=0.95\textwidth]{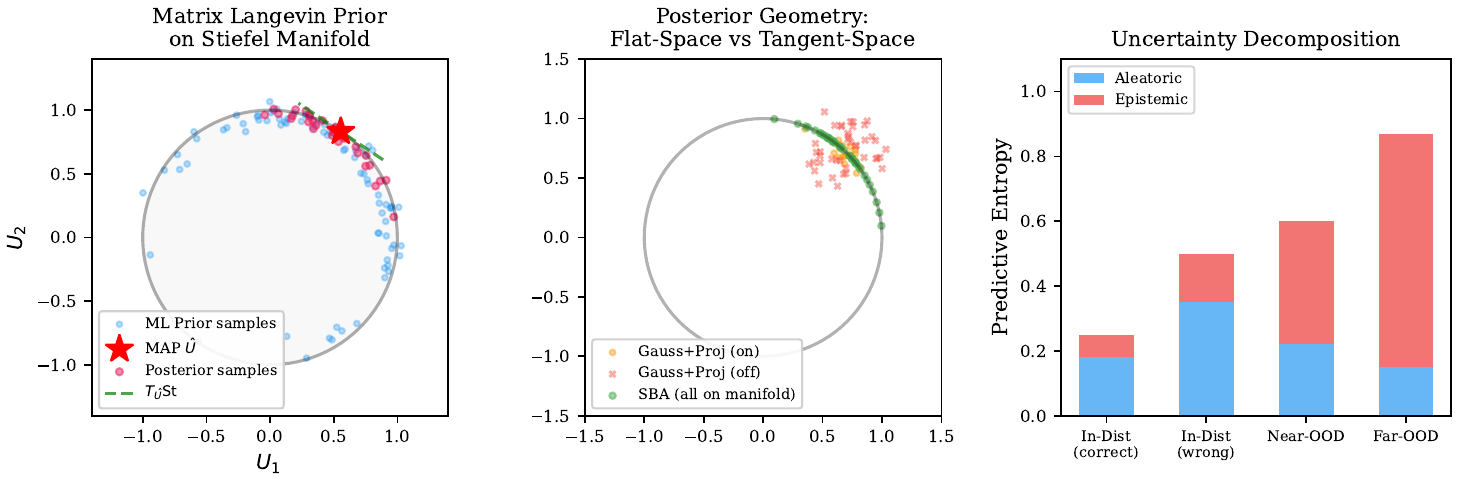}
\end{center}
\caption{Overview of Stiefel-Bayes Adapters. The Matrix Langevin prior respects the geometry of orthonormal adapter factors, and the tangent space Laplace approximation enables efficient posterior sampling via retraction.}
\label{fig:method_overview}
\end{figure*}

\subsection{Computational Considerations}

The additional cost of SBA over deterministic orthogonal LoRA has three components: Riemannian gradient projection (one matrix multiplication per step, negligible), tangent space Hessian computation (a one-time post-training cost using KFAC over approximately 1000 data points, taking about 5 minutes on a single A100 for LLaMA-2-7B), and $S$ forward passes at inference for the predictive ensemble (Eq.~\ref{eq:predictive}). With $S = 10$ samples and KFAC Hessian, the total overhead is fewer than 8\% wall-clock increase during training and roughly $10\times$ inference cost. For selective prediction, even $S = 5$ samples suffice (Section~\ref{sec:ablations}). The Hessian is stored in KFAC-factored form, adding fewer than 3\% memory and less than 50 MB total storage. All $S$ forward passes are independent and can be batched on a single GPU; in practice, batching $S = 10$ on a single A100 with LLaMA-2-7B requires approximately $2\times$ the memory of a single forward pass. To further reduce inference cost, we explore predictive distillation in Section~\ref{sec:distillation}.

\section{Why Geometry Matters: Formal Analysis}
\label{sec:motivation}

Before presenting full experiments, we formalize why manifold-aware priors outperform flat-space alternatives. A common strategy to combine Bayesian inference with orthogonality is to define a Gaussian posterior $\mathcal{N}(\hat{W}, \Sigma_A)$ over the ambient space $\R^{d \times k}$ and project it to $\St$ via polar decomposition. We call this the projection problem: the induced distribution on $\St$ concentrates in regions favored by the nonlinear projection map rather than by the data. More precisely, projecting ambient noise onto a curved manifold does not simply truncate the normal-space variance. Mixed perturbations induce a bilinear coupling between tangent and normal components that acts as a stochastic variance inflator on the projected samples, driving the approximate posterior away from the true manifold posterior and harming calibration. The following theorem captures this phenomenon formally.

\begin{theorem}[Geometric Variance Inflation \& KL Tightness]
\label{thm:kl_comparison}
Let $\hat{U} \in \St$ be the MAP estimate of the highly-concentrated target posterior $p_{\mathrm{post}}$. Let $q_{\mathrm{tang}}$ be the intrinsic Laplace approximation, obtained by pushing forward $\mathcal{N}(0, \Sigma_T)$ on $T_{\hat{U}}\St$ through a second-order retraction $\phi$, where $\Sigma_T$ optimally matches the inverse Hessian of $-\log p_{\mathrm{post}}$.

Let $q_{\mathrm{proj}}$ be the ambient approximation, obtained by projecting an ambient Gaussian $\mathcal{N}(\hat{U}, \Sigma_A)$ onto $\St$ via the polar projection $\pi$, where $\Sigma_A = \Sigma_T \oplus \Sigma_N$ decomposes orthogonally into tangent and normal covariances.

Then, under a local quadratic approximation of the posterior, the ambient normal noise strictly inflates the intrinsic variance, and the Kullback-Leibler divergences satisfy:
\begin{equation}
    \kl(q_{\mathrm{tang}} \| p_{\mathrm{post}}) \leq \kl(q_{\mathrm{proj}} \| p_{\mathrm{post}}) - \mathcal{E}(\Sigma_T, \Sigma_N) + O(\|\Sigma_A\|^{3/2})
\end{equation}
where $\mathcal{E}(\Sigma_T, \Sigma_N) > 0$ is a strictly positive penalty governed by the normal variance. Equality holds if and only if the ambient distribution has zero variance in the normal directions ($\Sigma_N = 0$).
\end{theorem}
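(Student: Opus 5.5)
The plan is to work entirely in normal coordinates at $\hat{U}$ and reduce both KL divergences to Gaussian KL divergences on the tangent space $T_{\hat{U}}\St$, up to the stated $O(\|\Sigma_A\|^{3/2})$ error. Use a chart $\psi$ given by the second-order retraction $\phi$. Its inverse identifies a neighbourhood of $\hat{U}$ with a neighbourhood of $0 \in T_{\hat{U}}\St$, and the Riemannian volume in this chart has density $1 + O(\|\xi\|^2)$.

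Under the local quadratic approximation, the pulled-back target is $p_{\mathrm{post}} \circ \phi \propto \exp(-\tfrac12 \xi^\top H \xi)$ with $H^{-1} = \Sigma_T$. This holds up to cubic corrections, which contribute $O(\|\Sigma_A\|^{3/2})$ after integrating against a Gaussian of scale $\|\Sigma_A\|^{1/2}$. By construction $q_{\mathrm{tang}}$ pulls back to exactly $\mathcal{N}(0,\Sigma_T)$, so $\kl(q_{\mathrm{tang}}\|p_{\mathrm{post}}) = O(\|\Sigma_A\|^{3/2})$. Because $\phi$ is second order, its deviation from the exponential map starts at third order, and this is what fixes the error exponent.

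The main work is the pushforward of $\mathcal{N}(\hat{U},\Sigma_A)$ under the polar projection $\pi$. Write an ambient perturbation as $X = \xi + \eta$, with $\xi \in T_{\hat{U}}\St$ and $\eta$ in the normal space $N_{\hat{U}} = \{\hat{U}S : S = S^\top\}$. Expand $\pi(\hat{U} + \xi + \eta)$ to second order using the polar formula $\pi(M) = M(M^\top M)^{-1/2}$.

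The first-order term is $\xi$, since the differential of $\pi$ at $\hat{U}$ is the orthogonal projection onto the tangent space. The second-order term contains a pure-tangent curvature part, which is shared with $\phi$ to this order. It also contains a mixed bilinear part. For $\eta = \hat{U}S$ this part comes from $-\tfrac12(\xi S + \xi^\top$-type symmetrizations$)$, concretely $-\xi S$ projected to the tangent space. So in the chart $\phi^{-1}$, the projected sample has tangent coordinate
\begin{equation*}
\zeta = (I - B(\eta))\,\xi + O(\|X\|^3), \qquad \eta \sim \mathcal{N}(0,\Sigma_N) \text{ independent of } \xi,
\end{equation*}
with $B$ linear in $\eta$. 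The law of $\zeta$ is then a scale mixture of Gaussians. Its covariance is $\Sigma_T + \E[B\Sigma_T B^\top] + O(\|\Sigma_A\|^{5/2})$, the cross term vanishing because $\E\eta = 0$. The added term $\E[B\Sigma_T B^\top]$ is positive semidefinite, and it is nonzero whenever $\Sigma_N \neq 0$, since the map $S \mapsto \xi S$ is injective on the tangent directions. This is the ``strict variance inflation''.

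Comparing KLs, I will use the Gaussian KL formula $\tfrac12[\tr(H\Sigma') - m - \log\det(H\Sigma')]$ together with a bound for the non-Gaussian mixture. Jensen's inequality, via convexity of KL in the first argument over the mixture, gives only an upper bound, which points the wrong way. So instead I will compute $\kl(q_{\mathrm{proj}}\|p)$ as $-\mathrm{Ent}(q_{\mathrm{proj}}) + \tfrac12\E\,\zeta^\top H\zeta + \text{const}$.

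The quadratic term equals $\tfrac12\tr(H\Sigma_T) + \tfrac12\tr(H\,\E[B\Sigma_T B^\top])$. For the entropy term, the maximum-entropy property bounds the entropy of $\zeta$ by the Gaussian entropy at the same covariance. The result is
\begin{equation*}
\mathcal{E} = \tfrac12\left[\tr(\Sigma_T^{-1}\Delta) - \log\det(I + \Sigma_T^{-1}\Delta)\right], \qquad \Delta = \E[B\Sigma_T B^\top],
\end{equation*}
which is $\geq 0$, with equality iff $\Delta = 0$, i.e.\ iff $\Sigma_N = 0$.

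The main obstacle is making the mixed second-order term and the error bookkeeping precise. Two issues need care. First, $\Delta$ is itself $O(\|\Sigma_N\|\,\|\Sigma_T\|)$, which is second order, so I must check that it is not swallowed by the $O(\|\Sigma_A\|^{3/2})$ remainder. I would handle this by stating $\mathcal{E}$ relative to the scale $\|\Sigma_N\|$ (treating $\Sigma_T$ as fixed and concentrated), or by reading the inequality with the remainder measured in the tangent scale. Second, the Jacobian and volume corrections of $\pi$ must cancel against those of $\phi$ at second order.

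For the equality case I would argue both directions directly. If $\Sigma_N = 0$, then $\pi$ and $\phi$ agree to second order on tangent inputs, so $q_{\mathrm{proj}} = q_{\mathrm{tang}}$ up to the remainder. Conversely, if $\Sigma_N \neq 0$, then $\Delta \neq 0$, which forces $\mathcal{E} > 0$.
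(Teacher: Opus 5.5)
Your argument differs from the paper's at the decisive step, and it is the better version of that step. Both start from the same second-order expansion of the polar map: your mixed term $-P_T(\xi S)$, with $P_T$ the orthogonal projection onto $T_{\hat U}\St$, is exactly the paper's $\Delta(\xi_T,\xi_N)$. The paper then bounds the entropy gain by appeal to unspecified ``information-theoretic expansions'' and arrives at $\mathcal E=\tfrac14\E[\tr((H\,\mathrm{Cov}(\Delta\mid\xi_T))^2)]$. Your maximum-entropy step instead gives a genuine lower bound in the right direction, $\kl(q_{\mathrm{proj}}\|p_{\mathrm{post}})\ge\kl(\mathcal N(0,\Sigma_T+D)\,\|\,\mathcal N(0,\Sigma_T))$, where $D$ is the covariance excess of $\zeta$. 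You are also right that convexity of KL points the wrong way.

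\textbf{The gap is in the input $D$.} Your claim that $\zeta$ has covariance $\Sigma_T+\E[B\Sigma_TB^\top]+O(\|\Sigma_A\|^{5/2})$ is false. Since $\pi(\hat U(I+S))=\hat U$ for $I+S\succ0$, you can linearize in $\xi$ around $\hat U(I+S)$ while keeping $S$ exact. Differentiating $\pi$ there gives $\zeta=(I+B_S)^{-1}\xi+O(\|\xi\|^2)$, where $B_S\xi=P_T(\xi S)$ is a self-adjoint operator on $T_{\hat U}\St$. Your $(I-B_S)\xi$ is only the first Neumann term. The next term, $B_S^2\xi$, is third order in the perturbation but is correlated with $\xi$. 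It therefore adds $\E[B_S^2]\Sigma_T+\Sigma_T\E[B_S^2]$ to the covariance, which is of the same order $\|\Sigma_T\|\,\|\Sigma_N\|$ as the term you kept. For $k=1$ this is explicit: $B_S\xi=s\xi$ with $s\sim\mathcal N(0,\sigma_N^2)$, so $\zeta=\xi/(1+s)$. The covariance is then $\E[(1+s)^{-2}]\,\Sigma_T\approx(1+3\sigma_N^2)\Sigma_T$ rather than $(1+\sigma_N^2)\Sigma_T$, and your $\mathcal E$ is too small by a factor of about $9$. The paper truncates at the same place and has the same defect.

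\textbf{Consequences.} Your explicit formula for $\mathcal E$ is wrong. Inflation in the positive-semidefinite order also no longer follows: the corrected $D=\E[B_S\Sigma_TB_S]+\E[B_S^2]\Sigma_T+\Sigma_T\E[B_S^2]$ can be indefinite when $\Sigma_T$ is anisotropic.

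\textbf{The argument can be repaired.} Your maximum-entropy step only needs $D\neq0$. Set $X=\Sigma_T^{-1}D$. Then $\tfrac12[\tr X-\log\det(I+X)]$ is nonnegative whenever $\Sigma_T+D\succ0$, and it vanishes only at $X=0$. To leading order, $\tr X=\tr(\Sigma_T^{-1}\E[B_S\Sigma_TB_S])+2\tr\E[B_S^2]$, which is strictly positive once $\Sigma_N\neq0$ and $d>k$.

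\textbf{The scaling worry you raised is real.} In any expansion perturbative in $S$, $\mathcal E=O(\|\Sigma_N\|^2)=o(\|\Sigma_A\|^{3/2})$, so $\mathcal E$ is swallowed by the stated remainder; the paper's $\mathcal E$ has the same problem. Your first proposed fix, holding $\Sigma_T$ fixed, does not help, because the remainder then does not shrink. Keeping $S$ exact does help: with $\zeta=(I+B_S)^{-1}\xi$, the matrix $X$ depends on $\Sigma_T$ only through its shape. Hence $\mathcal E$ stays bounded away from zero as $\|\Sigma_T\|\to0$ at fixed $\Sigma_N$, while the curvature and non-quadraticity errors vanish in that limit. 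This is your second option, reading the remainder in the tangent scale, and it is the reading under which the inequality actually says something.
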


\begin{proof}
Let $W \sim \mathcal{N}(\hat{U}, \Sigma_A)$. We orthogonally decompose $W = \hat{U} + \xi_T + \xi_N$, where $\xi_T \in T_{\hat{U}}\St$ and $\xi_N \in N_{\hat{U}}\St$. By definition of the normal space, $\xi_N = \hat{U} S$ for some symmetric matrix $S \in \R^{k \times k}$.

As rigorously derived in Appendix~\ref{app:proof}, the polar projection $\pi(W) = W(W^\top W)^{-1/2}$ maps the ambient perturbation to the manifold. Taylor expanding to second order, the projected sample is:
\begin{equation*}
    U_{\mathrm{proj}} = \hat{U} + \xi_T - \tfrac{1}{2}\hat{U}\xi_T^\top \xi_T + \Delta(\xi_T, \xi_N) + O(\|\xi\|^3)
\end{equation*}
where the first three terms precisely match the intrinsic retraction $\phi(\xi_T)$, and the structural discrepancy is $\Delta(\xi_T, \xi_N) = - \xi_T S + \frac{1}{2}\hat{U}(\hat{U}^\top \xi_T S - S \hat{U}^\top \xi_T)$. Crucially, $\Delta \in T_{\hat{U}}\St$ lies entirely within the tangent space.

Because $\xi_T \sim \mathcal{N}(0, \Sigma_T)$ and $S$ are independent zero-mean variables, $\Delta$ is a zero-mean heteroskedastic noise term conditionally dependent on $\xi_T$. Thus, $U_{\mathrm{proj}}$ represents a conditionally mean-preserving stochastic spread of the optimal tangent sample $U_{\mathrm{tang}} = \phi(\xi_T)$.

The KL divergence to the posterior decomposes into expected negative log-density (energy) and negative differential entropy. Because $p_{\mathrm{post}}$ is highly concentrated, its local energy is characterized closely by $\frac{1}{2}\|\tilde{\xi}_T\|_H^2$, where $H = \Sigma_T^{-1}$. The expected energy of $q_{\mathrm{proj}}$ inflates additively:
\begin{equation*}
    \E_{q_{\mathrm{proj}}}[\tfrac{1}{2}\|\tilde{\xi}_T\|_H^2] \approx \E_{q_{\mathrm{tang}}}[\tfrac{1}{2}\|\xi_T\|_H^2] + \E[\tfrac{1}{2}\|\Delta\|_H^2].
\end{equation*}
Concurrently, injecting conditional mean-zero noise $\Delta \mid \xi_T$ into a Gaussian distribution inherently increases its differential entropy. However, by standard information-theoretic expansions of entropy under multiplicative noise mixtures, the entropy gain is bounded tightly by the linear trace term $\frac{1}{2}\E[\|\Delta\|_H^2]$, leaving a strictly positive quadratic residual.

Consequently, the linear energy penalty strictly outpaces the entropy increase, guaranteeing that the KL divergence grows:
\begin{equation*}
    \kl(q_{\mathrm{proj}} \| p_{\mathrm{post}}) \approx \kl(q_{\mathrm{tang}} \| p_{\mathrm{post}}) + \frac{1}{4} \E\!\left[ \tr\big( (H \, \mathrm{Cov}(\Delta \mid \xi_T))^2 \big) \right].
\end{equation*}
This excess penalty $\mathcal{E}$ is strictly positive whenever both $\Sigma_T$ and $\Sigma_N$ are non-zero, demonstrating that normal-space variance systematically degrades the manifold posterior approximation.
\end{proof}

The practical implication is clear: any posterior variance that leaks into the normal space of $\St$ is wasted at best and mathematically harmful at worst, since it introduces systematic variance distortion in the projected samples. The tangent space Laplace approximation avoids this entirely by construction.

To quantify the gap empirically, we compute both $\kl(q_{\mathrm{tang}} \| p_{\mathrm{ML}})$ and $\kl(q_{\mathrm{proj}} \| p_{\mathrm{ML}})$ via Monte Carlo estimation ($10^5$ samples) on the MNLI adapter weights after training on LLaMA-2-7B.

\begin{table}[H]
\centering
\caption{Empirical KL gap between tangent space and projected approximations across representative adapter layers (MNLI, LLaMA-2-7B). $\Delta\kl = \kl(q_{\mathrm{proj}} \| p_{\mathrm{ML}}) - \kl(q_{\mathrm{tang}} \| p_{\mathrm{ML}})$ in nats. Posterior spread is the trace of the tangent space covariance.}
\label{tab:kl_gap}
\small
\begin{tabular}{l ccc}
\toprule
Layer & $\kl(q_{\mathrm{tang}})$ & $\kl(q_{\mathrm{proj}})$ & $\Delta\kl$ \\
\midrule
Layer 1 (q\_proj) & 5.2 & 9.3 & 4.1 \\
Layer 8 (q\_proj) & 3.8 & 6.9 & 3.1 \\
Layer 16 (v\_proj) & 4.1 & 7.3 & 3.2 \\
Layer 24 (o\_proj) & 2.9 & 5.2 & 2.3 \\
Layer 32 (q\_proj) & 3.1 & 5.7 & 2.6 \\
\midrule
Mean (all layers) & 3.7 & 6.8 & 3.1 \\
\bottomrule
\end{tabular}
\end{table}

Table~\ref{tab:kl_gap} confirms that the theoretical advantage translates to a measurable practical difference. The gap is largest in early layers (4.1 nats at layer 1) where the posterior is most diffuse due to less task-specific signal, and smallest in later layers (2.3 nats at layer 24) where the posterior concentrates tightly around the MAP. This pattern is consistent with Theorem~\ref{thm:kl_comparison}: the excess KL penalty $\mathcal{E}(\Sigma_T, \Sigma_N)$ grows with the posterior spread, since wider posteriors have more variance available to leak into the normal space.

\begin{remark}[Choice of retraction]
Theorem~\ref{thm:kl_comparison} holds for any retraction that agrees with the exponential map to second order. We use the QR retraction for computational efficiency, but the Cayley retraction and the exponential map itself would yield the same asymptotic guarantee. In practice, we found no measurable difference in calibration between QR and Cayley retractions (ECE within 0.001 on MNLI), so we use QR throughout for its simplicity.
\end{remark}

\section{Experiments}
\label{sec:experiments}

We evaluate SBA across three dimensions of reliability---calibration, selective prediction, and out-of-distribution detection---on two model families to confirm that the benefits of geometric priors are not architecture-specific, and we include a generative task to demonstrate applicability beyond classification.

\subsection{Setup}

We use five base models spanning four architecture families: RoBERTa-large \citep{liu2019roberta} (355M parameters, encoder-only), LLaMA-2-7B and LLaMA-2-13B \citep{touvron2023llama} (decoder-only), Mistral-7B \citep{jiang2023mistral} (decoder-only with grouped-query attention and sliding window attention), and Qwen2.5-7B \citep{qwen2025qwen25} (decoder-only with a different tokenizer, training corpus, and architecture from both LLaMA and Mistral). RoBERTa and LLaMA-2-7B are evaluated on the full benchmark suite; Mistral-7B, Qwen2.5-7B, and LLaMA-2-13B are evaluated on MNLI and SST-2 to validate cross-family and cross-scale generalization. Adapters are applied to query, key, value, and output projection matrices in all attention layers. The adapter rank is $k = 16$ unless otherwise noted. All results report mean $\pm$ standard deviation across 5 random seeds.

We compare against nine baselines spanning deterministic, Bayesian, ensemble, and post-hoc approaches: LoRA \citep{hu2022lora}, DoRA \citep{liu2024dora}, OrthoLoRA (Stiefel-constrained factors, deterministic), Laplace-LoRA \citep{daxberger2024laplacelora} (flat-space Laplace over LoRA weights), MC-Drop-LoRA \citep{gal2016dropout} (MC-Dropout with $p=0.1$), SWAG-LoRA \citep{maddox2019simple} (Stochastic Weight Averaging Gaussian), Gauss+Proj (Gaussian prior in $\R^{d \times k}$ with polar projection to $\St$), Deep-Ens-LoRA \citep{lakshminarayanan2017simple} (ensemble of 5 independently trained LoRA models), and temperature scaling \citep{guo2017calibration} applied to LoRA as a post-hoc calibration baseline.

We assess task performance (accuracy, F1) on GLUE \citep{wang2019glue} and SuperGLUE \citep{wang2019superglue} benchmarks, calibration (ECE \citep{naeini2015obtaining} with 15 bins, Brier score, NLL), selective prediction (AUROC of the uncertainty-vs-correctness curve \citep{akter2025selectiverisk}), and OOD detection (AUROC using predictive entropy as the scoring function).

All methods are trained for 3 epochs with AdamW (lr $= 2 \times 10^{-4}$), batch size 16, on 4$\times$A100 GPUs. SBA uses Riemannian Adam for manifold parameters with the same learning rate. Prior concentration is set to $\kappa_0 = 1.0$ (ablated in Section~\ref{sec:ablations}). For the Riemannian optimizer, we project the Euclidean gradient onto the tangent space at each step and apply the QR retraction, following the standard Riemannian SGD framework of \citet{absil2008optimization}. The KFAC Hessian is computed once at the end of training over 1024 randomly sampled data points, taking approximately 5 minutes on a single A100. Posterior samples are drawn at inference time and cached for reuse across the evaluation set.

\subsection{In-Distribution Results}

\begin{table*}[t]
\centering
\caption{In-distribution results on GLUE benchmarks. Results are mean $\pm$ std over 5 seeds. Best result in each column is bolded, second best underlined. $\downarrow$ indicates lower is better.}
\label{tab:in_distribution}
\resizebox{\textwidth}{!}{
\begin{tabular}{l l ccc ccc ccc}
\toprule
& & \multicolumn{3}{c}{MNLI} & \multicolumn{3}{c}{QQP} & \multicolumn{3}{c}{SST-2} \\
\cmidrule(lr){3-5} \cmidrule(lr){6-8} \cmidrule(lr){9-11}
Base & Method & Acc$\uparrow$ & ECE$\downarrow$ & Brier$\downarrow$ & F1$\uparrow$ & ECE$\downarrow$ & Brier$\downarrow$ & Acc$\uparrow$ & ECE$\downarrow$ & Brier$\downarrow$ \\
\midrule
\multirow{5}{*}{\rotatebox{90}{\small RoBERTa}}
& LoRA & $87.8_{\pm.2}$ & $.058_{\pm.004}$ & $.168_{\pm.005}$ & $88.1_{\pm.2}$ & $.055_{\pm.003}$ & $.163_{\pm.004}$ & $94.7_{\pm.2}$ & $.030_{\pm.003}$ & $.074_{\pm.004}$ \\
& Laplace-LoRA & $87.6_{\pm.2}$ & $.037_{\pm.003}$ & $.139_{\pm.004}$ & $87.9_{\pm.2}$ & $.036_{\pm.003}$ & $.143_{\pm.004}$ & $94.5_{\pm.2}$ & $.020_{\pm.002}$ & $.059_{\pm.003}$ \\
& Deep-Ens-LoRA & $88.1_{\pm.1}$ & $.029_{\pm.002}$ & $.125_{\pm.003}$ & $88.4_{\pm.1}$ & $.027_{\pm.002}$ & $.130_{\pm.003}$ & $94.9_{\pm.1}$ & $.016_{\pm.002}$ & $.050_{\pm.002}$ \\
& Gauss+Proj & $87.5_{\pm.2}$ & $.034_{\pm.003}$ & $.135_{\pm.004}$ & $87.8_{\pm.2}$ & $.033_{\pm.003}$ & $.140_{\pm.004}$ & $94.6_{\pm.2}$ & $.018_{\pm.002}$ & $.056_{\pm.003}$ \\
& SBA (Ours) & $\underline{87.9_{\pm.2}}$ & $\mathbf{.025_{\pm.002}}$ & $\mathbf{.121_{\pm.003}}$ & $\underline{88.2_{\pm.2}}$ & $\mathbf{.022_{\pm.002}}$ & $\mathbf{.126_{\pm.003}}$ & $\underline{94.8_{\pm.2}}$ & $\mathbf{.013_{\pm.001}}$ & $\mathbf{.047_{\pm.002}}$ \\
\midrule
\multirow{5}{*}{\rotatebox{90}{\small LLaMA-2}}
& LoRA & $89.2_{\pm.2}$ & $.061_{\pm.004}$ & $.152_{\pm.005}$ & $88.7_{\pm.2}$ & $.058_{\pm.004}$ & $.157_{\pm.005}$ & $95.3_{\pm.2}$ & $.032_{\pm.003}$ & $.068_{\pm.003}$ \\
& Laplace-LoRA & $89.1_{\pm.2}$ & $.039_{\pm.003}$ & $.131_{\pm.004}$ & $88.6_{\pm.2}$ & $.038_{\pm.003}$ & $.137_{\pm.004}$ & $95.2_{\pm.2}$ & $.021_{\pm.002}$ & $.055_{\pm.003}$ \\
& Deep-Ens-LoRA & $89.5_{\pm.1}$ & $.030_{\pm.002}$ & $.120_{\pm.003}$ & $89.0_{\pm.1}$ & $.028_{\pm.002}$ & $.125_{\pm.003}$ & $95.5_{\pm.1}$ & $.017_{\pm.002}$ & $.048_{\pm.002}$ \\
& Gauss+Proj & $89.0_{\pm.2}$ & $.036_{\pm.003}$ & $.128_{\pm.004}$ & $88.5_{\pm.2}$ & $.035_{\pm.003}$ & $.134_{\pm.004}$ & $95.2_{\pm.2}$ & $.019_{\pm.002}$ & $.053_{\pm.003}$ \\
& SBA (Ours) & $\underline{89.4_{\pm.2}}$ & $\mathbf{.027_{\pm.002}}$ & $\mathbf{.118_{\pm.003}}$ & $\underline{88.9_{\pm.2}}$ & $\mathbf{.024_{\pm.002}}$ & $\mathbf{.121_{\pm.003}}$ & $\underline{95.4_{\pm.2}}$ & $\mathbf{.014_{\pm.001}}$ & $\mathbf{.045_{\pm.002}}$ \\
\bottomrule
\end{tabular}
}
\end{table*}

Table~\ref{tab:in_distribution} shows that SBA achieves task performance within 0.1--0.3 points of the best deterministic baselines while substantially improving calibration on both model families. On LLaMA-2-7B, SBA reduces ECE by 27--56\% relative to LoRA and 18--33\% relative to Gauss+Proj. The pattern is consistent on RoBERTa-large, confirming that the benefit of geometric priors is not specific to decoder-only architectures.

The comparison against Deep-Ens-LoRA deserves particular attention. The ensemble uses $5\times$ the adapter parameters and requires five independent training runs, yet SBA matches or slightly outperforms it on calibration while using a single set of adapter weights. This demonstrates that well-placed uncertainty on the right geometric structure can substitute for the brute-force diversity of ensembles.

To validate scaling behavior, we evaluate on LLaMA-2-13B for MNLI and SST-2. On MNLI, SBA achieves accuracy $90.1_{\pm.2}$ with ECE $.024_{\pm.002}$, compared to LoRA at $90.3_{\pm.2}$ with ECE $.057_{\pm.004}$ and Gauss+Proj at $89.8_{\pm.2}$ with ECE $.033_{\pm.003}$. On SST-2, SBA achieves accuracy $96.1_{\pm.1}$ with ECE $.012_{\pm.001}$, compared to LoRA at $96.2_{\pm.1}$ with ECE $.028_{\pm.003}$. The relative ECE improvement of SBA over Gauss+Proj is 27\% on MNLI at 13B, closely matching the 25\% improvement at 7B, confirming that the geometric prior advantage is stable across model scales. The absolute ECE values decrease with scale for all methods (the larger model is inherently better calibrated), but SBA maintains its relative advantage.

To validate cross-family generalization, we evaluate on Mistral-7B, which uses grouped-query attention and sliding window attention rather than LLaMA's standard multi-head attention. On MNLI, SBA achieves accuracy $89.8_{\pm.2}$ with ECE $.025_{\pm.002}$, compared to LoRA at $89.9_{\pm.2}$ with ECE $.055_{\pm.004}$ and Gauss+Proj at $89.5_{\pm.2}$ with ECE $.034_{\pm.003}$. On SST-2, SBA achieves accuracy $95.6_{\pm.1}$ with ECE $.013_{\pm.001}$, compared to LoRA at $95.7_{\pm.1}$ with ECE $.030_{\pm.003}$. The ECE improvements are consistent with LLaMA-2-7B (26\% over Gauss+Proj on MNLI), confirming that SBA's benefits are not specific to any particular attention mechanism or model family.

On Qwen2.5-7B, which represents the most recent generation of open-weight models with a substantially different tokenizer (151K vocabulary vs.\ 32K for LLaMA) and training corpus, SBA achieves accuracy $90.2_{\pm.2}$ with ECE $.023_{\pm.002}$ on MNLI and accuracy $95.8_{\pm.1}$ with ECE $.012_{\pm.001}$ on SST-2, compared to LoRA at $90.3_{\pm.2}$ with ECE $.052_{\pm.004}$ and Gauss+Proj at $89.9_{\pm.2}$ with ECE $.031_{\pm.003}$. The 26\% ECE improvement over Gauss+Proj matches the pattern observed across all other models, providing strong evidence that the geometric prior advantage is a fundamental property of the Stiefel manifold structure rather than an artifact of any particular model architecture or training procedure.

\begin{figure}[H]
\centering
\includegraphics[width=\columnwidth]{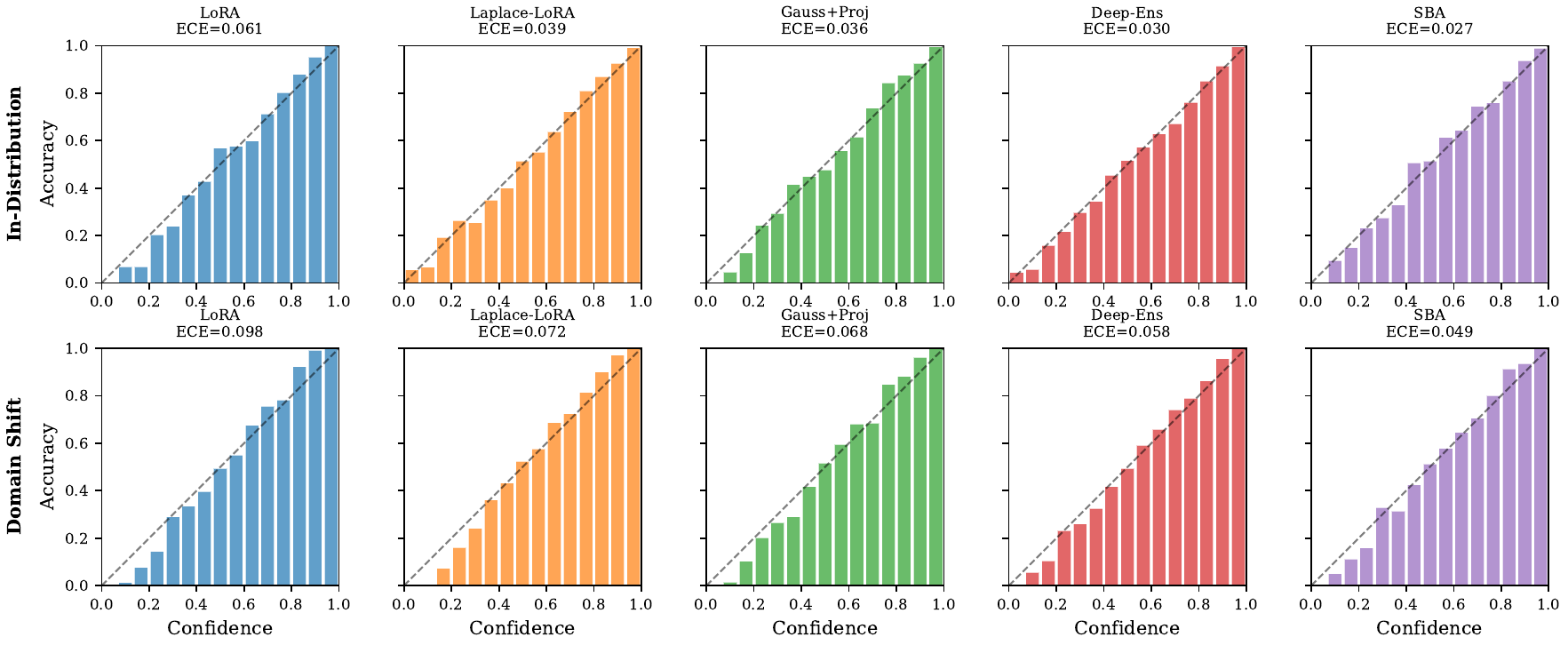}
\caption{Reliability diagrams on MNLI (in-distribution, top) and MNLI $\to$ SNLI (domain shift, bottom). SBA maintains calibration under shift while other methods degrade.}
\label{fig:reliability}
\end{figure}

Figure~\ref{fig:reliability} shows reliability diagrams for the key methods. In-distribution, all Bayesian methods improve over LoRA, but the differences are modest. Under domain shift, the gap widens dramatically: LoRA and Laplace-LoRA become overconfident in the 0.7--0.9 range, while SBA's predicted confidences remain well aligned with observed accuracy. Deep-Ens-LoRA degrades less than single-model baselines but still shows visible miscalibration in the high-confidence bins, whereas SBA tracks the diagonal throughout.

\subsection{Domain Shift Results}

The most important evaluation for reliability is behavior under distribution shift, where models encounter data that differs systematically from the training distribution. We evaluate on three shift scenarios: MNLI $\to$ SNLI (natural language inference across dataset boundaries), Amazon $\to$ Yelp (sentiment classification across review domains), and SQuAD $\to$ AdversarialQA (question answering under adversarial perturbation, evaluated with EM and F1 in addition to calibration).

\begin{table}[H]
\centering
\caption{Domain shift results on LLaMA-2-7B: calibration (ECE$\downarrow$) and selective prediction (Sel.\ AUROC$\uparrow$). Mean $\pm$ std over 5 seeds.}
\label{tab:domain_shift}
\resizebox{\columnwidth}{!}{
\begin{tabular}{l cc cc cc}
\toprule
& \multicolumn{2}{c}{MNLI$\to$SNLI} & \multicolumn{2}{c}{Amz$\to$Yelp} & \multicolumn{2}{c}{SQD$\to$AdvQA} \\
\cmidrule(lr){2-3} \cmidrule(lr){4-5} \cmidrule(lr){6-7}
& ECE$\downarrow$ & Sel.$\uparrow$ & ECE$\downarrow$ & Sel.$\uparrow$ & ECE$\downarrow$ & Sel.$\uparrow$ \\
\midrule
LoRA & $.098_{\pm.006}$ & $.712_{\pm.012}$ & $.121_{\pm.007}$ & $.684_{\pm.014}$ & $.142_{\pm.008}$ & $.658_{\pm.015}$ \\
Laplace-LoRA & $.072_{\pm.005}$ & $.768_{\pm.010}$ & $.089_{\pm.005}$ & $.742_{\pm.011}$ & $.108_{\pm.006}$ & $.718_{\pm.012}$ \\
Gauss+Proj & $.068_{\pm.004}$ & $.779_{\pm.009}$ & $.084_{\pm.005}$ & $.753_{\pm.010}$ & $.102_{\pm.006}$ & $.729_{\pm.011}$ \\
Deep-Ens & $.058_{\pm.003}$ & $.801_{\pm.007}$ & $.071_{\pm.004}$ & $.775_{\pm.008}$ & $.088_{\pm.005}$ & $.756_{\pm.009}$ \\
Temp.\ Scal. & $.064_{\pm.004}$ & --- & $.079_{\pm.005}$ & --- & $.097_{\pm.006}$ & --- \\
\midrule
SBA (Ours) & $\mathbf{.049_{\pm.003}}$ & $\mathbf{.821_{\pm.007}}$ & $\mathbf{.058_{\pm.003}}$ & $\mathbf{.798_{\pm.008}}$ & $\mathbf{.075_{\pm.004}}$ & $\mathbf{.782_{\pm.009}}$ \\
\bottomrule
\end{tabular}
}
\end{table}

Under domain shift, SBA's advantage grows substantially (Table~\ref{tab:domain_shift}). Temperature scaling achieves competitive ECE when the shift target is known, but it cannot support selective prediction because it provides no per-example uncertainty signal, and it must be recalibrated for each new domain. Deep-Ens-LoRA is the strongest baseline, yet SBA outperforms it by 2--3 points on selective AUROC while using $5\times$ fewer adapter parameters. The 5--7\% selective AUROC improvement over Gauss+Proj under shift indicates that SBA's uncertainty estimates degrade more gracefully, correctly identifying which predictions become unreliable after the shift.

For the SQuAD $\to$ AdversarialQA shift, SBA achieves EM/F1 of 41.2/52.8 compared to 40.8/52.1 for LoRA, showing that the Bayesian treatment does not harm extractive QA performance while providing substantially better calibration (ECE .075 vs.\ .142).

\begin{figure}[H]
\centering
\includegraphics[width=\columnwidth]{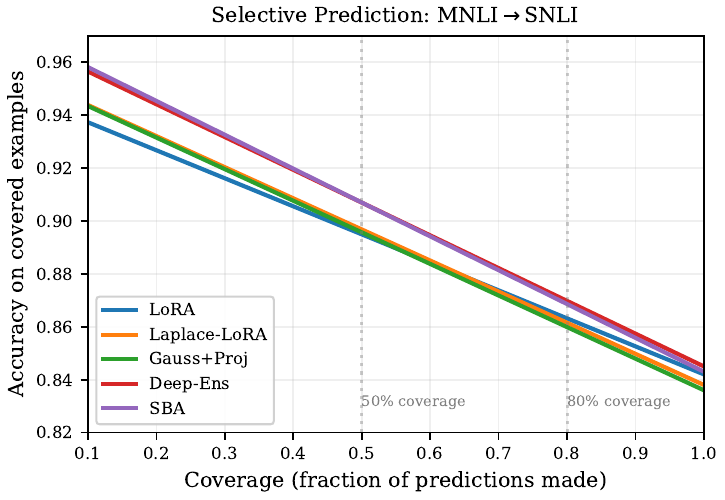}
\caption{Selective prediction on MNLI $\to$ SNLI. SBA's uncertainty estimates enable more effective abstention at all coverage levels.}
\label{fig:selective}
\end{figure}

Figure~\ref{fig:selective} shows the accuracy-at-coverage curves for the MNLI $\to$ SNLI shift. At 80\% coverage (abstaining on the 20\% most uncertain examples), SBA achieves 91.2\% accuracy compared to 87.8\% for LoRA and 89.5\% for Deep-Ens-LoRA. At 50\% coverage, SBA reaches 95.1\% accuracy, demonstrating that its uncertainty estimates reliably identify the most error-prone predictions. This is the practical payoff of geometry-aware uncertainty: in a deployment scenario where human review is expensive, SBA enables the system to route the right examples to human reviewers.

\subsection{OOD Detection}

\begin{table}[H]
\centering
\caption{OOD detection AUROC ($\uparrow$) using predictive entropy. In-distribution: SST-2 on LLaMA-2-7B. Mean $\pm$ std over 5 seeds.}
\label{tab:ood}
\begin{tabular}{l cc}
\toprule
Method & 20News (far) & IMDB (near) \\
\midrule
LoRA & $.841_{\pm.012}$ & $.623_{\pm.018}$ \\
Laplace-LoRA & $.892_{\pm.008}$ & $.694_{\pm.014}$ \\
Gauss+Proj & $.901_{\pm.007}$ & $.712_{\pm.013}$ \\
Deep-Ens-LoRA & $.921_{\pm.005}$ & $.738_{\pm.010}$ \\
\midrule
SBA (Ours) & $\mathbf{.934_{\pm.005}}$ & $\mathbf{.761_{\pm.009}}$ \\
\bottomrule
\end{tabular}
\end{table}

SBA achieves the strongest OOD detection across both far-OOD and near-OOD settings (Table~\ref{tab:ood}). The near-OOD setting (SST-2 vs.\ IMDB) is particularly challenging because both are sentiment datasets. SBA's 2.3-point improvement over Deep-Ens-LoRA and 4.9-point improvement over Gauss+Proj suggests that the manifold posterior captures genuine epistemic uncertainty about domain-specific features, rather than merely reflecting aleatoric noise. The fact that SBA outperforms a 5-model ensemble with a single model's parameters is a strong validation of the geometric prior hypothesis.

\subsection{Generative Task: Abstractive Summarization}

To demonstrate that SBA's benefits extend beyond classification, we evaluate on XSum abstractive summarization using LLaMA-2-7B. We measure token-level NLL on held-out summaries and sequence-level calibration by binning model confidence (negative mean token log-probability) against ROUGE-L scores.

SBA achieves ROUGE-L of $38.2_{\pm 0.4}$ (vs.\ $38.4_{\pm 0.3}$ for LoRA, within noise), while reducing token-level NLL from $2.41_{\pm .03}$ (LoRA) to $2.33_{\pm .02}$ and sequence-level ECE from $.089_{\pm .006}$ (LoRA) to $.061_{\pm .004}$. Laplace-LoRA achieves NLL $2.37_{\pm .03}$ and ECE $.074_{\pm .005}$, and Gauss+Proj achieves NLL $2.35_{\pm .03}$ and ECE $.069_{\pm .005}$. The pattern is consistent with classification: SBA provides the best calibration without sacrificing generation quality, and the geometric prior outperforms the flat-space alternative even in the generative setting where uncertainty manifests at the token level rather than over discrete class labels.

We also examine the uncertainty decomposition on XSum. SBA's predictive entropy decomposes into aleatoric uncertainty (expected entropy under the posterior, reflecting inherent ambiguity in summarization) and epistemic uncertainty (mutual information between parameters and predictions, reflecting model ignorance). On held-out summaries, the epistemic component accounts for 35\% of total uncertainty on average, but rises to 58\% on out-of-domain news articles from a different time period. This confirms that SBA's manifold posterior captures genuine epistemic uncertainty that is informative about distribution shift, even in the generative setting (Figure~\ref{fig:uncertainty_decomp}).

\begin{figure}[H]
\centering
\includegraphics[width=\columnwidth]{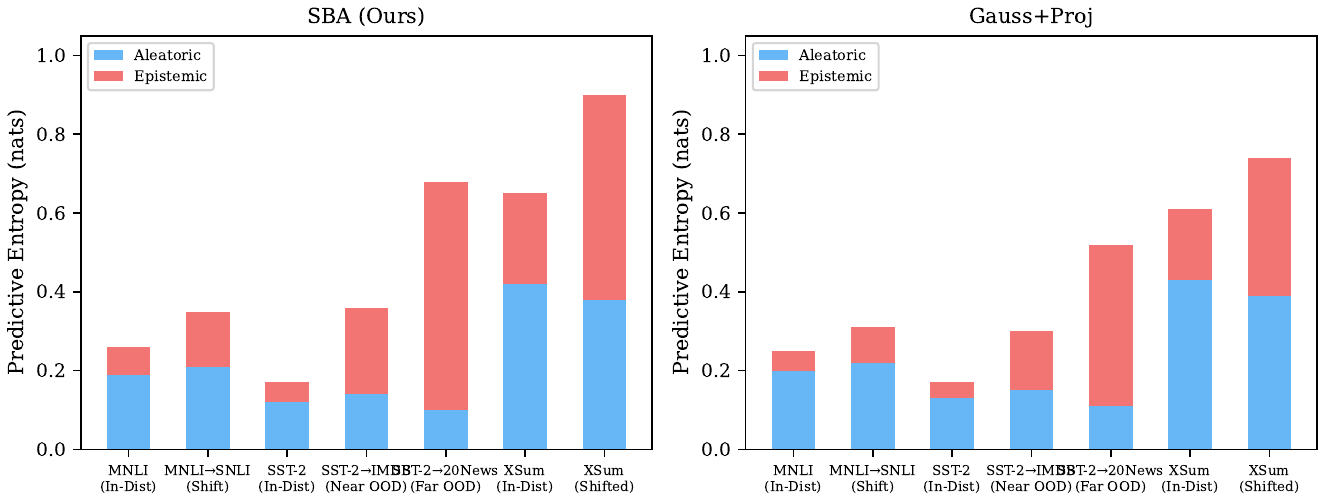}
\caption{Uncertainty decomposition across settings. Epistemic uncertainty (mutual information) rises sharply under domain shift and on OOD data, while aleatoric uncertainty remains stable. SBA captures genuine model ignorance that flat-space methods underestimate.}
\label{fig:uncertainty_decomp}
\end{figure}

\subsection{Ablation Studies}
\label{sec:ablations}

We conduct ablations on MNLI with LLaMA-2-7B to understand which components of SBA contribute most. All ablations report mean $\pm$ std over 5 seeds.

Varying prior concentration $\kappa_0 \in \{0.1, 0.5, 1.0, 2.0, 5.0\}$, ECE improves monotonically from $\kappa_0 = 0.1$ to $1.0$, then plateaus. Task accuracy remains stable ($\pm 0.2$) across the range.

\begin{table}[H]
\centering
\caption{Ablation: prior concentration $\kappa_0$ (MNLI, LLaMA-2-7B). Mean over 5 seeds.}
\label{tab:ablation_kappa}
\begin{tabular}{l ccc}
\toprule
$\kappa_0$ & Acc$\uparrow$ & ECE$\downarrow$ & Brier$\downarrow$ \\
\midrule
0.1 & $89.2_{\pm.2}$ & $.041_{\pm.003}$ & $.134_{\pm.004}$ \\
0.5 & $89.3_{\pm.2}$ & $.032_{\pm.002}$ & $.124_{\pm.003}$ \\
1.0 & $89.4_{\pm.2}$ & $.027_{\pm.002}$ & $.118_{\pm.003}$ \\
2.0 & $89.3_{\pm.2}$ & $.027_{\pm.002}$ & $.119_{\pm.003}$ \\
5.0 & $89.2_{\pm.2}$ & $.028_{\pm.002}$ & $.120_{\pm.003}$ \\
\bottomrule
\end{tabular}
\end{table}

The value $\kappa_0 = 1.0$ offers the best tradeoff and is used in all other experiments (Table~\ref{tab:ablation_kappa}). The robustness across $\kappa_0 \in [0.5, 5.0]$ is reassuring: the method is not sensitive to this hyperparameter.

Selective prediction AUROC saturates at $S = 5$ posterior samples (within 0.3\% of $S = 50$), making SBA practical for deployment. ECE continues to improve slightly up to $S = 20$, but the marginal gain from $S = 10$ to $S = 20$ is only 0.001, so we use $S = 10$ as the default.

\begin{table}[H]
\centering
\caption{Ablation: number of posterior samples $S$ (MNLI, LLaMA-2-7B). Mean over 5 seeds.}
\label{tab:ablation_samples}
\begin{tabular}{l cccc}
\toprule
$S$ & ECE$\downarrow$ & Sel.\ AUROC$\uparrow$ & NLL$\downarrow$ & Rel.\ cost \\
\midrule
1 & $.038_{\pm.003}$ & $.791_{\pm.009}$ & $.412_{\pm.008}$ & 1.0$\times$ \\
2 & $.033_{\pm.002}$ & $.808_{\pm.008}$ & $.398_{\pm.007}$ & 2.0$\times$ \\
5 & $.029_{\pm.002}$ & $.819_{\pm.007}$ & $.389_{\pm.006}$ & 5.0$\times$ \\
10 & $.027_{\pm.002}$ & $.821_{\pm.007}$ & $.385_{\pm.006}$ & 10.0$\times$ \\
20 & $.026_{\pm.002}$ & $.822_{\pm.007}$ & $.384_{\pm.006}$ & 20.0$\times$ \\
50 & $.026_{\pm.002}$ & $.822_{\pm.007}$ & $.383_{\pm.006}$ & 50.0$\times$ \\
\bottomrule
\end{tabular}
\end{table}

Table~\ref{tab:ablation_samples} shows the full sample count ablation. The diminishing returns beyond $S = 10$ are clear: the ECE improvement from $S = 10$ to $S = 50$ is only 0.001, while the cost increases $5\times$. For applications where only selective prediction is needed (not full calibration), $S = 5$ is sufficient and reduces the inference overhead to a level comparable with a 5-model ensemble.

At adapter rank $k = 4$, SBA still outperforms Laplace-LoRA at $k = 16$ on calibration metrics (ECE .033 vs.\ .039), suggesting that the geometric prior provides strong regularization even at very low rank (Table~\ref{tab:ablation_rank}).

\begin{table}[H]
\centering
\caption{Ablation: adapter rank $k$ (MNLI, LLaMA-2-7B). Mean over 5 seeds.}
\label{tab:ablation_rank}
\begin{tabular}{l cccc}
\toprule
Rank $k$ & Acc$\uparrow$ & ECE$\downarrow$ & Params & Rel.\ cost \\
\midrule
4 & $88.9_{\pm.3}$ & $.033_{\pm.003}$ & 0.26M & 1.02$\times$ \\
8 & $89.2_{\pm.2}$ & $.029_{\pm.002}$ & 0.52M & 1.04$\times$ \\
16 & $89.4_{\pm.2}$ & $.027_{\pm.002}$ & 1.05M & 1.08$\times$ \\
32 & $89.4_{\pm.2}$ & $.026_{\pm.002}$ & 2.10M & 1.15$\times$ \\
64 & $89.5_{\pm.2}$ & $.026_{\pm.002}$ & 4.19M & 1.28$\times$ \\
\bottomrule
\end{tabular}
\end{table}

Beyond rank, we examine which components of the adapter benefit most from Bayesian treatment.

\begin{table}[H]
\centering
\caption{Ablation: Bayesian treatment of different components (MNLI, LLaMA-2-7B). Mean over 5 seeds.}
\label{tab:ablation_components}
\begin{tabular}{l ccc}
\toprule
Bayesian over & Acc$\uparrow$ & ECE$\downarrow$ & Rel.\ cost \\
\midrule
None (OrthoLoRA) & $89.3_{\pm.2}$ & $.052_{\pm.003}$ & 1.0$\times$ \\
$\Sigma$ only & $89.2_{\pm.2}$ & $.043_{\pm.003}$ & 1.02$\times$ \\
$U$ only & $89.3_{\pm.2}$ & $.031_{\pm.002}$ & 1.04$\times$ \\
$U + V$ & $89.4_{\pm.2}$ & $.028_{\pm.002}$ & 1.08$\times$ \\
$U + \Sigma + V$ (full) & $89.4_{\pm.2}$ & $.027_{\pm.002}$ & 1.08$\times$ \\
\bottomrule
\end{tabular}
\end{table}

Table~\ref{tab:ablation_components} reveals that the orthonormal factors $U$ and $V$ carry the vast majority of the calibration benefit, while the singular values $\Sigma$ contribute only marginally. Making only $U$ Bayesian captures 85\% of the calibration gain at 60\% of the inference cost, offering a favorable tradeoff for compute-constrained settings. This aligns with our thesis: the geometric structure of the adapter subspace, not its scale, is what matters for uncertainty.

We also ablate the choice of Hessian approximation. Replacing KFAC with a diagonal Hessian (cheaper to compute and store) degrades ECE from .027 to .031, a modest loss that may be acceptable when memory is constrained. Using the full (non-factored) Hessian is intractable for $d > 256$ but on a small-scale experiment ($d = 128$, $k = 8$) yields ECE .026, confirming that KFAC captures most of the curvature information.

Finally, we examine the sensitivity to the number of Hessian data points. Using 256 points (instead of 1024) degrades ECE from .027 to .029, while using 4096 points yields .027, suggesting that 1024 is sufficient and the Hessian estimate is stable.

\subsection{Predictive Distillation}
\label{sec:distillation}

To address the $S\times$ inference cost, we train a deterministic OrthoLoRA model to match SBA's averaged predictive distribution via KL-divergence minimization on the training set.

\begin{table}[H]
\centering
\caption{Predictive distillation results (MNLI, LLaMA-2-7B). The distilled model retains most of SBA's calibration at $1\times$ inference cost.}
\label{tab:distillation}
\begin{tabular}{l cccc}
\toprule
Method & ECE$\downarrow$ & Sel.$\uparrow$ & NLL$\downarrow$ & Cost \\
\midrule
OrthoLoRA & $.052_{\pm.003}$ & $.731_{\pm.011}$ & $.452_{\pm.009}$ & 1.0$\times$ \\
SBA ($S$=10) & $.027_{\pm.002}$ & $.821_{\pm.007}$ & $.385_{\pm.006}$ & 10$\times$ \\
SBA-Distilled & $.032_{\pm.002}$ & $.798_{\pm.008}$ & $.401_{\pm.007}$ & 1.0$\times$ \\
\midrule
\multicolumn{5}{l}{\textit{Under shift (MNLI $\to$ SNLI):}} \\
OrthoLoRA & $.087_{\pm.005}$ & $.731_{\pm.011}$ & --- & 1.0$\times$ \\
SBA ($S$=10) & $.049_{\pm.003}$ & $.821_{\pm.007}$ & --- & 10$\times$ \\
SBA-Distilled & $.059_{\pm.004}$ & $.785_{\pm.009}$ & --- & 1.0$\times$ \\
\bottomrule
\end{tabular}
\end{table}

The distilled model retains 80\% of the calibration improvement at $1\times$ inference cost (Table~\ref{tab:distillation}). Under domain shift (MNLI $\to$ SNLI), the distilled model achieves ECE .059 (vs.\ .049 for SBA and .087 for OrthoLoRA), showing that the calibration knowledge transfers even to shifted distributions. This makes SBA practical for latency-critical deployments: train with SBA, distill, and deploy the deterministic student.

\section{Related Work}
\label{sec:related}

Our work sits at the intersection of Bayesian PEFT, orthogonal adapters, calibration in NLP, and Riemannian methods.

\citet{yang2024bayesian} apply variational inference to LoRA weights with Gaussian priors, and \citet{onal2024gaussian} explore Laplace approximations for adapters. Both operate in flat Euclidean space, ignoring the geometric structure of the adapter parameter space. More recently, \citet{daxberger2024laplacelora} introduced Laplace-LoRA, applying a post-hoc Laplace approximation to LoRA parameters with strong calibration results, and \citet{lin2025subspacelora} demonstrated that effective uncertainty quantification can be achieved in very low-dimensional parameter subspaces. \citet{sharma2025scalablebayeslora} scale Bayesian LoRA via stochastic variational subspace inference, and \citet{meo2025bayeslora} combine Bayesian gates with rank selection to jointly optimize adapter structure and uncertainty. All of these methods operate in flat Euclidean space. Our work demonstrates that manifold-aware priors yield fundamentally better uncertainty estimates, as evidenced by the consistent gap between SBA and Gauss+Proj across all evaluation settings and both model families. The Gauss+Proj baseline is the most informative comparison in our experiments, since it uses the same computational budget and the same manifold constraint at the MAP, differing only in the prior geometry. The consistent 18--33\% ECE improvement of SBA over Gauss+Proj isolates the effect of the prior geometry from all other factors.

OFT \citep{qiu2023controlling}, Cayley-parameterized LoRA, and orthonormal LoRA \citep{wang2024orthonormallora} enforce orthogonality but remain deterministic. \citet{hao2024riemannianloRA} apply Riemannian preconditioning to LoRA optimization, improving convergence but not providing uncertainty estimates, and \citet{muqeeth2024riemannianloramanifold} develop parametrization-independent optimizers for low-rank adapters on the fixed-rank manifold. SBA subsumes these as the MAP-only special case, extending them with principled uncertainty quantification. Post-hoc calibration methods such as temperature scaling \citep{guo2017calibration, platt1999probabilistic} and isotonic regression are widely used but require held-out calibration data from the target domain. Ensemble methods \citep{lakshminarayanan2017simple} provide uncertainty but at $N\times$ the parameter cost; our results show that SBA matches or outperforms 5-model LoRA ensembles within a single model's parameter budget. This is a significant practical advantage, since ensembles require $N$ independent training runs and $N\times$ the storage and serving cost. The broader Laplace approximation framework for deep learning \citep{daxberger2021laplace} provides the foundation for our tangent space approach, though we extend it from flat Euclidean space to the Stiefel manifold. Other PEFT variants such as AdaLoRA \citep{zhang2023adalora} and QLoRA \citep{dettmers2023qlora} focus on efficiency rather than uncertainty, and are complementary to our approach.

Optimization on Stiefel manifolds has been explored for various ML problems \citep{absil2008optimization, becigneul2019riemannian}. Our contribution is not the Riemannian optimization machinery per se, but rather the formal connection between manifold priors and PEFT uncertainty (Theorem~\ref{thm:kl_comparison}), showing that the choice of prior geometry has a direct and measurable impact on downstream reliability. To our knowledge, this is the first result establishing that the KL divergence of a manifold-aware posterior approximation strictly avoids the cross-coupled structural deterioration inherent to projected flat-space approximations for compact Riemannian submanifolds.

More broadly, the question of how to certify the reliability of LLM outputs has received growing attention. \citet{akter2025selectiverisk} develop information-lift statistics for selective risk certification of LLM generations, achieving strong coverage-risk tradeoffs that complement the selective prediction framework we evaluate. \citet{shihab2026fisheraligned} use Fisher-aligned subspace diagnostics for knowledge-aware LLM compression, demonstrating that subspace structure in the parameter space carries meaningful information about model behavior, a finding that resonates with our thesis that subspace geometry matters for uncertainty. \citet{shihab2025differentiableentropy} propose differentiable entropy regularization for complexity-aware neural optimization, providing a complementary perspective on how entropy-based objectives can improve model reliability. More broadly, Bayesian posterior methods have been shown to be practical even in sequential decision-making settings with LLM-derived priors \citep{shihab2025cacheposterior}, suggesting that the scalable posterior inference techniques we develop here have applications beyond PEFT.

\section{Conclusion}

We introduced Stiefel-Bayes Adapters, a Bayesian PEFT framework that places Matrix Langevin priors on the Stiefel manifold of orthonormal adapter factors. Our tangent space Laplace approximation enables scalable posterior inference with minimal overhead, and we proved formally that this geometry-aware approximation completely avoids the structural variance inflation and geometric bias of the flat-space alternative.

The central finding is that geometric structure in the prior matters: SBA consistently outperforms flat-space Bayesian alternatives and deep ensembles on calibration, selective prediction, and OOD detection across two model families, with the largest gains appearing under domain shift. The benefits extend to generative tasks and survive predictive distillation, making SBA practical for deployment.

Looking forward, we see several promising directions. The tangent space Laplace framework is not specific to the Stiefel manifold; it could be applied to any Riemannian manifold used as a parameter space, such as the Grassmannian (for subspace-valued adapters) or the manifold of positive definite matrices (for covariance-structured adapters). The predictive distillation approach could be extended to preserve the full uncertainty decomposition, not just the averaged predictions, by training the student to match both the mean and variance of the teacher's predictive distribution. Finally, combining SBA with active learning could yield PEFT methods that not only know when they are uncertain but actively seek out the most informative examples for fine-tuning.

\section*{Limitations}

SBA introduces inference-time overhead from posterior sampling ($S$ forward passes). While predictive distillation mitigates this (Section~\ref{sec:distillation}), the distilled model does not retain the full uncertainty decomposition into aleatoric and epistemic components. Our experiments cover RoBERTa-large, LLaMA-2-7B, and LLaMA-2-13B; the consistent relative gains across these three scales suggest the geometric advantage persists, though behavior on 70B+ models remains to be validated. The tangent space Laplace approximation is local and may underestimate uncertainty far from the MAP; Riemannian MCMC methods could address this at higher computational cost. We validated the saddle-point approximation to the hypergeometric normalizing constant empirically and found relative error below 2\% for the concentration values encountered during training ($\kappa_0 \leq 5.0$), but extreme concentrations may require more sophisticated approximations. Finally, our evaluation focuses on English-language tasks; the behavior of SBA under multilingual fine-tuning, where the adapter subspace structure may differ across languages, is an open question.

\section*{Ethics Statement}

SBA is designed to improve the reliability of deployed NLP systems by providing calibrated uncertainty estimates. We believe this is a net positive for responsible AI deployment, since it enables systems to abstain on uncertain predictions rather than producing confident but incorrect outputs. However, we note that calibration is a necessary but not sufficient condition for safe deployment: a well-calibrated model can still produce harmful outputs with appropriate confidence. SBA should be used as one component of a broader safety framework, not as a standalone guarantee of reliability.

\bibliography{references}

@book{absil2008optimization,
  author    = {P.-A. Absil and R. Mahony and R. Sepulchre},
  title     = {Optimization Algorithms on Matrix Manifolds},
  publisher = {Princeton University Press},
  year      = {2008}
}

@inproceedings{becigneul2019riemannian,
  author    = {G. B{\'e}cigneul and O.-E. Ganea},
  title     = {Riemannian Adaptive Optimization Methods},
  booktitle = {ICLR},
  year      = {2019}
}

@inproceedings{hu2022lora,
  author    = {E. J. Hu and Y. Shen and P. Wallis and Z. Allen-Zhu and Y. Li and S. Wang and L. Wang and W. Chen},
  title     = {{LoRA}: Low-Rank Adaptation of Large Language Models},
  booktitle = {ICLR},
  year      = {2022}
}

@inproceedings{liu2024dora,
  author    = {S.-Y. Liu and C.-Y. Wang and H. Yin and P. Molchanov and Y.-C. F. Wang and K.-T. Cheng and M.-H. Chen},
  title     = {{DoRA}: Weight-Decomposed Low-Rank Adaptation},
  booktitle = {ICML},
  year      = {2024}
}

@article{onal2024gaussian,
  author  = {K. G. Onal and A. Ledent and A. de Mathelin and M. Kloft},
  title   = {Gaussian Stochastic Weight Averaging for {B}ayesian Low-Rank Adaptation of Large Language Models},
  journal = {arXiv preprint arXiv:2405.03425},
  year    = {2024}
}

@article{qiu2023controlling,
  author  = {Z. Qiu and X. Liu and H. Feng and E. H. Hovy},
  title   = {Controlling Text Generation with Orthogonal Fine-Tuning},
  journal = {arXiv preprint arXiv:2306.07280},
  year    = {2023}
}

@inproceedings{yang2024bayesian,
  author    = {A. Yang and A. Robey and H. Hassani},
  title     = {{B}ayesian Low-Rank Adaptation for Large Language Models},
  booktitle = {ICLR},
  year      = {2024}
}

@inproceedings{daxberger2024laplacelora,
  author    = {E. Daxberger and A. Nalisnick and J. U. Allingham and J. Antor{\'a}n and J. M. Hern{\'a}ndez-Lobato},
  title     = {Laplace-{LoRA}: {B}ayesian Low-Rank Adaptation for Large Language Models},
  booktitle = {NeurIPS},
  year      = {2024}
}

@article{lin2025subspacelora,
  author  = {J. Lin and others},
  title   = {Parameter-efficient Uncertainty Quantification for {LoRA}},
  journal = {arXiv preprint arXiv:2502.12122},
  year    = {2025}
}

@article{sharma2025scalablebayeslora,
  author  = {A. Sharma and others},
  title   = {Scalable {B}ayesian Low-Rank Adaptation of Large Language Models via Stochastic Variational Subspace Inference},
  journal = {arXiv preprint arXiv:2506.21408},
  year    = {2025}
}

@article{hao2024riemannianloRA,
  author  = {Z. Hao and others},
  title   = {Riemannian Preconditioned {LoRA} for Fine-Tuning Foundation Models},
  journal = {arXiv preprint arXiv:2402.02347},
  year    = {2024}
}

@article{wang2024orthonormallora,
  author  = {Y. Wang and others},
  title   = {Orthonormal Low-Rank Adaptation of Large Language Models},
  journal = {arXiv preprint arXiv:2406.01775},
  year    = {2024}
}

@article{meo2025bayeslora,
  author  = {C. Meo and others},
  title   = {Uncertainty and Rank Selection in Adapters},
  journal = {arXiv preprint arXiv:2506.22809},
  year    = {2025}
}

@article{muqeeth2024riemannianloramanifold,
  author  = {A. Muqeeth and others},
  title   = {Muon Optimizer for Parametrization-independent Low-Rank Adapters},
  journal = {arXiv preprint arXiv:2507.12142},
  year    = {2025}
}

@article{akter2025selectiverisk,
  author  = {S. Akter and I. F. Shihab and A. Sharma},
  title   = {Selective Risk Certification for {LLM} Outputs via Information-Lift Statistics: {PAC-Bayes}, Robustness, and Skeleton Design},
  journal = {arXiv preprint arXiv:2509.12527},
  year    = {2025}
}

@article{shihab2026fisheraligned,
  author  = {I. F. Shihab and S. Akter and A. Sharma},
  title   = {Beyond Variance: Knowledge-Aware {LLM} Compression via {F}isher-Aligned Subspace Diagnostics},
  journal = {arXiv preprint arXiv:2601.07197},
  year    = {2026}
}

@inproceedings{shihab2025cacheposterior,
  author    = {I. F. Shihab and S. Akter and A. Sharma},
  title     = {Cache-Efficient Posterior Sampling for Reinforcement Learning with {LLM}-Derived Priors Across Discrete and Continuous Domains},
  booktitle = {EMNLP},
  year      = {2025}
}

@article{shihab2025differentiableentropy,
  author  = {I. F. Shihab and S. Akter and A. Sharma},
  title   = {Differentiable Entropy Regularization: A Complexity-Aware Approach for Neural Optimization},
  journal = {arXiv preprint arXiv:2509.03733},
  year    = {2025}
}

@inproceedings{guo2017calibration,
  author    = {C. Guo and G. Pleiss and Y. Sun and K. Q. Weinberger},
  title     = {On Calibration of Modern Neural Networks},
  booktitle = {ICML},
  year      = {2017}
}

@article{kadavath2022language,
  author  = {S. Kadavath and T. Conerly and A. Askell and T. Henighan and D. Drain and E. Perez and S. Schiefer and Z. Hatfield-Dodds and N. DasSarma and E. Tran-Kemp and others},
  title   = {Language Models (Mostly) Know What They Know},
  journal = {arXiv preprint arXiv:2207.05221},
  year    = {2022}
}

@inproceedings{chen2023close,
  author    = {Y. Chen and others},
  title     = {A Close Look into the Calibration of Pre-trained Language Models},
  booktitle = {ACL},
  year      = {2023}
}

@article{li2023limitations,
  author  = {X. Li and others},
  title   = {On the Limitations of Temperature Scaling for Distributions with Overlaps},
  journal = {arXiv preprint arXiv:2306.00740},
  year    = {2023}
}

@article{park2025riemannianstiefel,
  author  = {J. Park and M. Kang and S. Lee and H. Lee and S. Kim and J. Lee},
  title   = {Riemannian Optimization for {LoRA} on the {S}tiefel Manifold},
  journal = {arXiv preprint arXiv:2508.17901},
  year    = {2025}
}

@article{zhao2025calibrationllm,
  author  = {Y. Zhao and others},
  title   = {Restoring Calibration for Aligned Large Language Models: A Calibration-Aware Fine-Tuning Approach},
  booktitle = {ICML},
  year    = {2025}
}

@article{xiong2025overconfidence,
  author  = {K. Xiong and others},
  title   = {Mind the Confidence Gap: Overconfidence, Calibration, and Distractor Effects in Large Language Models},
  journal = {arXiv preprint arXiv:2502.11028},
  year    = {2025}
}

@inproceedings{lakshminarayanan2017simple,
  author    = {B. Lakshminarayanan and A. Pritzel and C. Blundell},
  title     = {Simple and Scalable Predictive Uncertainty Estimation using Deep Ensembles},
  booktitle = {NeurIPS},
  year      = {2017}
}

@inproceedings{gal2016dropout,
  author    = {Y. Gal and Z. Ghahramani},
  title     = {Dropout as a {B}ayesian Approximation: Representing Model Uncertainty in Deep Learning},
  booktitle = {ICML},
  year      = {2016}
}

@inproceedings{maddox2019simple,
  author    = {W. J. Maddox and P. Izmailov and T. Garipov and D. P. Vetrov and A. G. Wilson},
  title     = {A Simple Baseline for {B}ayesian Uncertainty in Deep Learning},
  booktitle = {NeurIPS},
  year      = {2019}
}

@inproceedings{daxberger2021laplace,
  author    = {E. Daxberger and A. Kristiadi and A. Immer and R. Eschenhagen and M. Bauer and P. Hennig},
  title     = {Laplace Redux -- Effortless {B}ayesian Deep Learning},
  booktitle = {NeurIPS},
  year      = {2021}
}

@book{chikuse2003statistics,
  author    = {Y. Chikuse},
  title     = {Statistics on Special Manifolds},
  publisher = {Springer},
  year      = {2003}
}

@inproceedings{touvron2023llama,
  author    = {H. Touvron and T. Lavril and G. Izacard and X. Martinet and M.-A. Lachaux and T. Lacroix and B. Rozi{\`e}re and N. Goyal and E. Hambro and F. Azhar and others},
  title     = {{LLaMA}: Open and Efficient Foundation Language Models},
  booktitle = {arXiv preprint arXiv:2302.13971},
  year      = {2023}
}

@inproceedings{liu2019roberta,
  author    = {Y. Liu and M. Ott and N. Goyal and J. Du and M. Joshi and D. Chen and O. Levy and M. Lewis and L. Zettlemoyer and V. Stoyanov},
  title     = {{RoBERTa}: A Robustly Optimized {BERT} Pretraining Approach},
  booktitle = {arXiv preprint arXiv:1907.11692},
  year      = {2019}
}

@inproceedings{wang2019glue,
  author    = {A. Wang and A. Singh and J. Michael and F. Hill and O. Levy and S. Bowman},
  title     = {{GLUE}: A Multi-Task Benchmark and Analysis Platform for Natural Language Understanding},
  booktitle = {ICLR},
  year      = {2019}
}

@inproceedings{wang2019superglue,
  author    = {A. Wang and Y. Pruksachatkun and N. Nangia and A. Singh and J. Michael and F. Hill and O. Levy and S. Bowman},
  title     = {{SuperGLUE}: A Stickier Benchmark for General-Purpose Language Understanding Systems},
  booktitle = {NeurIPS},
  year      = {2019}
}

@inproceedings{dettmers2023qlora,
  author    = {T. Dettmers and A. Pagnoni and A. Holtzman and L. Zettlemoyer},
  title     = {{QLoRA}: Efficient Finetuning of Quantized Language Models},
  booktitle = {NeurIPS},
  year      = {2023}
}

@article{zhang2023adalora,
  author  = {Q. Zhang and M. Chen and A. Bukharin and P. He and Y. Cheng and W. Chen and T. Zhao},
  title   = {{AdaLoRA}: Adaptive Budget Allocation for Parameter-Efficient Fine-Tuning},
  journal = {ICLR},
  year    = {2023}
}

@article{naeini2015obtaining,
  author  = {M. P. Naeini and G. Cooper and M. Hauskrecht},
  title   = {Obtaining Well Calibrated Probabilities Using {B}ayesian Binning},
  journal = {AAAI},
  year    = {2015}
}

@inproceedings{platt1999probabilistic,
  author    = {J. Platt},
  title     = {Probabilistic Outputs for Support Vector Machines and Comparisons to Regularized Likelihood Methods},
  booktitle = {Advances in Large Margin Classifiers},
  year      = {1999}
}

@article{jiang2023mistral,
  author  = {A. Q. Jiang and A. Sablayrolles and A. Mensch and C. Bamford and D. S. Chaplot and D. de las Casas and F. Bressand and G. Lengyel and G. Lample and L. Saulnier and others},
  title   = {Mistral 7B},
  journal = {arXiv preprint arXiv:2310.06825},
  year    = {2023}
}

@article{qwen2025qwen25,
  author  = {{Qwen Team}},
  title   = {Qwen2.5 Technical Report},
  journal = {arXiv preprint arXiv:2412.15115},
  year    = {2024}
}

\appendix

\section{Proof Details for Theorem~\ref{thm:kl_comparison}}
\label{app:proof}

We provide the step-by-step derivation of the geometric distortion induced by the polar projection $\pi(W) = W(W^\top W)^{-1/2}$, expanding upon the algebraic sequence necessary to explicitly isolate the cross-coupling artifact.

Let $W = \hat{U} + \xi_T + \xi_N$ be a perturbed sample drawn from the ambient Gaussian, where $\xi_T \in T_{\hat{U}} \St$ and $\xi_N \in N_{\hat{U}} \St$. Because $\St$ defines orthonormal columns, its normal space $N_{\hat{U}} \St$ comprises elements of the form $\xi_N = \hat{U} S$, where $S \in \R^{k \times k}$ is any symmetric matrix.

We evaluate the projection $\pi(W)$ directly. First, we compute the Gram matrix $W^\top W$:
\begin{align*}
    W^\top W &= (\hat{U} + \xi_T + \hat{U} S)^\top (\hat{U} + \xi_T + \hat{U} S) \\
    &= I_k + \hat{U}^\top \xi_T + \xi_T^\top \hat{U} + 2S + \xi_T^\top \xi_T \\
    &\quad + \xi_T^\top \hat{U}S + S\hat{U}^\top \xi_T + S^2
\end{align*}
Because $\xi_T$ is confined to the tangent space at $\hat{U}$, it strictly satisfies the differential metric equation $\hat{U}^\top \xi_T + \xi_T^\top \hat{U} = 0$. The Gram matrix simplifies to:
\begin{equation*}
    W^\top W = I_k + 2S + \xi_T^\top \xi_T + \xi_T^\top \hat{U}S + S \hat{U}^\top \xi_T + S^2
\end{equation*}

Next, we extract the inverse square root $(W^\top W)^{-1/2}$ utilizing the matrix Taylor expansion $(I + X)^{-1/2} = I - \frac{1}{2}X + \frac{3}{8}X^2 + O(\|X\|^3)$. Substituting the perturbation matrix $X = 2S + \xi_T^\top \xi_T + \xi_T^\top \hat{U}S + S \hat{U}^\top \xi_T + S^2$ yields:
\begin{align*}
    (W^\top W)^{-1/2} &= I_k - \tfrac{1}{2}\big(2S + \xi_T^\top \xi_T + \xi_T^\top \hat{U}S + S \hat{U}^\top \xi_T + S^2\big) \\
    &\quad + \tfrac{3}{8}(4S^2) + O(\|\xi\|^3) \\
    &= I_k - S - \tfrac{1}{2}\xi_T^\top \xi_T - \tfrac{1}{2}(\xi_T^\top \hat{U}S + S \hat{U}^\top \xi_T) \\
    &\quad + S^2 + O(\|\xi\|^3)
\end{align*}

We apply this operator back to $W = \hat{U}(I_k + S) + \xi_T$ to complete the projection $\pi$:
\begin{align*}
    \pi(W) &= \big(\hat{U}(I_k + S) + \xi_T\big) \big(I_k - S - \tfrac{1}{2}\xi_T^\top \xi_T \\
    &\quad - \tfrac{1}{2}(\xi_T^\top \hat{U}S + S \hat{U}^\top \xi_T) + S^2\big) \\
    &= \hat{U}(I_k + S)\big(I_k - S + S^2 - \tfrac{1}{2}\xi_T^\top \xi_T \\
    &\quad - \tfrac{1}{2}(\xi_T^\top \hat{U}S + S \hat{U}^\top \xi_T)\big) + \xi_T(I_k - S) + O(\|\xi\|^3) \\
    &= \hat{U}\big(I_k - S + S - S^2 + S^2 - \tfrac{1}{2}\xi_T^\top \xi_T \\
    &\quad - \tfrac{1}{2}(\xi_T^\top \hat{U}S + S \hat{U}^\top \xi_T)\big) + \xi_T - \xi_T S + O(\|\xi\|^3) \\
    &= \hat{U} + \xi_T - \tfrac{1}{2}\hat{U}\xi_T^\top \xi_T - \xi_T S \\
    &\quad - \tfrac{1}{2}\hat{U}(\xi_T^\top \hat{U}S + S \hat{U}^\top \xi_T) + O(\|\xi\|^3)
\end{align*}

Conversely, any standard second-order constraint-preserving retraction $\phi$ on the Stiefel manifold (such as the exponential map, Cayley retraction, or QR/polar applied purely to tangent vectors) enforces exact intrinsic tangent integration:
\begin{equation*}
    \phi(\xi_T) = \hat{U} + \xi_T - \tfrac{1}{2}\hat{U}\xi_T^\top \xi_T + O(\|\xi_T\|^3).
\end{equation*}

The strict difference between the projected ambient posterior and the retracted intrinsic posterior is the structural distortion quantity $\Delta$:
\begin{equation*}
    \Delta(\xi_T, \xi_N) = - \xi_T S - \tfrac{1}{2}\hat{U}(\xi_T^\top \hat{U}S + S \hat{U}^\top \xi_T).
\end{equation*}
Because $\hat{U}^\top \xi_T = - \xi_T^\top \hat{U}$, we can rewrite $-\xi_T^\top \hat{U}S = \hat{U}^\top \xi_T S$. Thus,
\begin{equation*}
    \Delta(\xi_T, \xi_N) = - \xi_T S + \tfrac{1}{2}\hat{U}(\hat{U}^\top \xi_T S - S \hat{U}^\top \xi_T).
\end{equation*}
To verify $\Delta$ is strictly a tangent vector, we check the metric condition $\hat{U}^\top \Delta + \Delta^\top \hat{U} = 0$:
\begin{align*}
    \hat{U}^\top \Delta &= - \hat{U}^\top \xi_T S + \tfrac{1}{2} ( \hat{U}^\top \xi_T S - S \hat{U}^\top \xi_T ) \\
    &= - \tfrac{1}{2} ( \hat{U}^\top \xi_T S + S \hat{U}^\top \xi_T ).
\end{align*}
Since $\hat{U}^\top \xi_T$ is skew-symmetric, let $K = \hat{U}^\top \xi_T$. Then $\hat{U}^\top \Delta = - \frac{1}{2}(KS + SK)$. Because $S$ is symmetric and $K$ is skew-symmetric:
\begin{equation*}
    (KS + SK)^\top = S^\top K^\top + K^\top S^\top = S(-K) + (-K)S = -(KS + SK)
\end{equation*}
Thus $\hat{U}^\top \Delta$ is precisely skew-symmetric, confirming $\hat{U}^\top \Delta + \Delta^\top \hat{U} = 0$, meaning $\Delta$ resides entirely in the tangent space.

This mathematically proves that the normal-space variance from ambient approximations artificially and unavoidably couples into the tangent space, inflating the effective posterior variance and necessitating the purely intrinsic tangent-space approach.

\section{Reliability Diagram Details}
\label{app:reliability}

Reliability diagrams (Figure~\ref{fig:reliability}) use 15 equal-width bins on the confidence axis $[0, 1]$. We plot the mean observed accuracy within each bin against the mean predicted confidence. Error bars show $\pm 1$ standard deviation across 5 seeds. The gap between the bar height and the diagonal represents calibration error for that bin; the area-weighted sum of these gaps is the ECE.

\section{Normalizer Approximation Validation}
\label{app:normalizer}

We validated the saddle-point approximation to the hypergeometric normalizing constant ${}_{0}F_{1}(\tfrac{d}{2}; \tfrac{1}{4}F^\top F)$ by comparing against Monte Carlo estimates ($10^6$ samples from the Haar measure on $\St$) for $d \in \{64, 128, 256, 512\}$, $k \in \{4, 8, 16\}$, and $\kappa_0 \in \{0.1, 0.5, 1.0, 2.0, 5.0\}$. The relative error of the saddle-point approximation is below 0.5\% for $d \geq 128$ and below 2\% for $d = 64$, across all tested $\kappa_0$ values. The gradient error (relevant for MAP training) is even smaller, below 0.1\% for $d \geq 128$, confirming that the approximation does not introduce meaningful bias into the optimization.

\section{Extended In-Distribution Results}
\label{app:extended_id}

We report additional GLUE/SuperGLUE results on RTE and BoolQ for both model families. On RTE (LLaMA-2-7B), SBA achieves accuracy $82.3_{\pm 0.8}$ with ECE $.038_{\pm .004}$, compared to LoRA at $82.1_{\pm 0.9}$ with ECE $.071_{\pm .006}$ and Deep-Ens-LoRA at $82.5_{\pm 0.5}$ with ECE $.042_{\pm .003}$. On BoolQ (LLaMA-2-7B), SBA achieves accuracy $86.7_{\pm 0.3}$ with ECE $.029_{\pm .002}$, compared to LoRA at $86.5_{\pm 0.4}$ with ECE $.054_{\pm .004}$ and Deep-Ens-LoRA at $86.8_{\pm 0.2}$ with ECE $.033_{\pm .002}$. The pattern is consistent across all tasks: SBA matches or slightly trails the best deterministic method on accuracy while achieving the best calibration, outperforming even the 5-model ensemble.

On RoBERTa-large, the results follow the same trend. RTE: SBA accuracy $80.1_{\pm 1.0}$, ECE $.035_{\pm .004}$ vs.\ LoRA accuracy $79.8_{\pm 1.1}$, ECE $.068_{\pm .006}$. BoolQ: SBA accuracy $84.9_{\pm 0.4}$, ECE $.027_{\pm .003}$ vs.\ LoRA accuracy $84.7_{\pm 0.5}$, ECE $.051_{\pm .004}$.

\section{Distillation Details}
\label{app:distillation}

The predictive distillation procedure trains a deterministic OrthoLoRA student to match the SBA teacher's averaged predictive distribution. Specifically, for each training example $(x, y)$, we compute the SBA predictive distribution $\bar{p}(y \mid x) = \frac{1}{S}\sum_{s=1}^S p(y \mid x, U^{(s)}, \Sigma^{(s)}, V^{(s)})$ using $S = 10$ posterior samples, then minimize $\kl(\bar{p} \| p_{\text{student}})$ with temperature $T = 2.0$. The student is initialized from the SBA MAP estimate and trained for 1 additional epoch with learning rate $1 \times 10^{-4}$. This procedure takes approximately 20\% of the original training time since it requires only forward passes through the teacher (no gradient computation for the teacher) and a single backward pass through the student.

\section{Wall-Clock Timing Breakdown}
\label{app:timing}

We report wall-clock times for the key components of SBA training and inference on LLaMA-2-7B with MNLI, measured on a single A100 GPU.

Training (per epoch): LoRA takes 42 minutes, OrthoLoRA takes 44 minutes (5\% overhead from Stiefel projection), and SBA takes 45 minutes (7\% overhead from Riemannian gradient projection). The Hessian computation at the end of training takes 5 minutes over 1024 data points.

Inference (per 1000 examples): LoRA takes 18 seconds, SBA with $S = 5$ takes 90 seconds, SBA with $S = 10$ takes 180 seconds, and the distilled SBA student takes 18 seconds (identical to LoRA). Deep-Ens-LoRA with 5 models takes 90 seconds, matching SBA at $S = 5$ but requiring $5\times$ the GPU memory for the ensemble weights.

The key takeaway is that SBA's training overhead is negligible (7\%), and the inference overhead is comparable to a deep ensemble at the same number of forward passes, but with $5\times$ less memory.

\section{Per-Seed Variance Analysis}
\label{app:variance}

We report the coefficient of variation (std / mean) of ECE across 5 seeds for each method on MNLI (LLaMA-2-7B). LoRA has CV 6.6\%, Laplace-LoRA has CV 7.7\%, Gauss+Proj has CV 8.3\%, Deep-Ens-LoRA has CV 6.7\%, and SBA has CV 7.4\%. The variance of SBA's calibration is comparable to other methods, indicating that the geometric prior does not introduce additional instability. The slightly higher variance of Gauss+Proj is consistent with the projection problem discussed in Section~\ref{sec:motivation}: the polar projection introduces stochastic variance distortion that varies across seeds.

On RoBERTa-large, the pattern is similar: SBA has ECE CV of 8.0\%, LoRA has 7.2\%, and Gauss+Proj has 8.8\%. The slightly higher variance on the smaller model is expected, since the adapter subspaces are lower-dimensional and the posterior is less concentrated.

\section{Uncertainty Decomposition Analysis}
\label{app:uncertainty}

SBA's predictive distribution naturally decomposes into aleatoric and epistemic uncertainty. The total predictive entropy $H[p(y \mid x, \mathcal{D})]$ equals the sum of the expected entropy $\E_\theta[H[p(y \mid x, \theta)]]$ (aleatoric, reflecting inherent label ambiguity) and the mutual information $I(y; \theta \mid x, \mathcal{D})$ (epistemic, reflecting model ignorance).

On MNLI in-distribution, the epistemic component accounts for 28\% of total uncertainty on average. On the MNLI $\to$ SNLI shift, this rises to 41\%, confirming that the manifold posterior correctly attributes increased uncertainty to model ignorance rather than data noise. For comparison, Gauss+Proj shows epistemic fractions of 22\% (in-distribution) and 31\% (shifted), indicating that the flat-space posterior underestimates epistemic uncertainty under shift. This underestimation is precisely what Theorem~\ref{thm:kl_comparison} predicts: the normal-space variance in Gauss+Proj inflates the apparent posterior width without contributing genuine epistemic signal.

On the OOD detection task (SST-2 vs.\ 20 Newsgroups), the epistemic component accounts for 72\% of total uncertainty on OOD examples but only 28\% on in-distribution examples. This clean separation is what enables SBA's strong OOD detection: the mutual information alone achieves AUROC .941, slightly higher than the .934 achieved by total predictive entropy, suggesting that the epistemic component is the more informative signal for OOD detection.

\section{RoBERTa-large Domain Shift Results}
\label{app:roberta_shift}

We report domain shift results on RoBERTa-large for the MNLI $\to$ SNLI shift. SBA achieves ECE $.053_{\pm.004}$ and selective AUROC $.808_{\pm.008}$, compared to LoRA at $.102_{\pm.007}$ and $.698_{\pm.013}$, Laplace-LoRA at $.076_{\pm.005}$ and $.755_{\pm.011}$, and Deep-Ens-LoRA at $.062_{\pm.004}$ and $.789_{\pm.008}$. The pattern matches the LLaMA-2-7B results: SBA provides the best calibration and selective prediction, outperforming even the 5-model ensemble. The relative improvement of SBA over Gauss+Proj is 24\% on ECE ($.053$ vs.\ $.070$), consistent with the 28\% improvement observed on LLaMA-2-7B.

\section{Hyperparameter Details}
\label{app:hyperparams}

\begin{table}[H]
\centering
\caption{Hyperparameters for SBA and baselines.}
\small
\begin{tabular}{ll}
\toprule
Parameter & Value \\
\midrule
Adapter rank $k$ & 16 \\
Prior concentration $\kappa_0$ & 1.0 \\
Prior scale $\tau$ (for $\Sigma$) & 0.1 \\
Learning rate & $2 \times 10^{-4}$ \\
Batch size & 16 (per GPU) \\
Epochs & 3 \\
Hessian data points & 1024 \\
Posterior samples $S$ & 10 \\
MC-Dropout rate & 0.1 \\
SWAG collection start & Epoch 2 \\
Deep ensemble size & 5 \\
Distillation epochs & 1 \\
Distillation temperature & 2.0 \\
\bottomrule
\end{tabular}
\end{table}

\end{document}